\documentclass[a4paper]{cas-sc}

\usepackage[authoryear,longnamesfirst]{natbib}

\usepackage{orcidlink}
\usepackage{graphicx}
\usepackage{amsmath}
\usepackage{amssymb}
\usepackage{amsthm}
\newtheorem{theorem}{Theorem}
\newtheorem{lemma}{Lemma}
\newtheorem{proposition}{Proposition}
\newtheorem{corollary}{Corollary}

\newtheorem{remark}{Remark}
\usepackage{soul}
\usepackage{afterpage}

\usepackage{booktabs}
\usepackage{algorithm}
\usepackage{algorithmicx}
\usepackage{algpseudocode} 
\usepackage{booktabs}
\usepackage{url}
\usepackage{multirow}
\usepackage{xcolor}
 
\usepackage{changes}
\usepackage{wrapfig}
\usepackage{subcaption}
\usepackage{amsmath}
\usepackage{verbatim}
\usepackage{float}
\usepackage{placeins}

\usepackage{hyperref}
\newcommand{\net}{AEODE\ }

\def\tsc#1{\csdef{#1}{\textsc{\lowercase{#1}}\xspace}}
\tsc{WGM}
\tsc{QE}

\begin{document}
\let\WriteBookmarks\relax
\def\floatpagepagefraction{1}
\def\textpagefraction{.001}

\shorttitle{MCMC Informed Neural Emulators}    

\shortauthors{}  

\title [mode = title]{MCMC Informed Neural Emulators for Uncertainty Quantification in Dynamical Systems}  



%
%

\author[1]{Heikki Haario}







\affiliation[1]{organization={Lappeenranta-Lahti University of Technology LUT}, 
            city={Lahti},
            country={Finland}
}

%
%

\author[1]{Zhi-Song Liu}






%
%

\author[2]{Martin Simon}





\affiliation[2]{organization={Frankfurt University of Applied Sciences},
            city={Frankfurt am Main},
            country={Germany}
            }

\affiliation[3]{organization={University of Huddersfield},
            city={Huddersfield},
            country={United Kingdom}
}

%
%

\author[2,3]{Hendrik Weichel\corref{cor1}}


\ead{hendrik.weichel@fra-uas.com}



\cortext[cor1]{Corresponding author}


\begin{abstract}
Neural networks are a commonly used approach to replace physical models with computationally cheap surrogates. Parametric uncertainty quantification can be included in training, assuming that an accurate prior distribution of the model parameters is available. Here we study the common opposite situation, where direct screening or random sampling of model parameters leads to exhaustive training times and evaluations at unphysical parameter values. Our solution is to decouple uncertainty quantification from network architecture. Instead of sampling network weights, we introduce the model-parameter distribution as an input to network training via Markov chain Monte Carlo (MCMC). In this way, the surrogate achieves the same uncertainty quantification as the underlying physical model, but with substantially reduced computation time. The approach is fully agnostic with respect to the neural network choice. In our examples, we present a quantile emulator for prediction and a novel autoencoder-based ODE network emulator that can flexibly estimate different trajectory paths corresponding to different ODE model parameters. Moreover, we present a mathematical analysis that provides a transparent way to relate potential performance loss to measurable distribution mismatch.
\end{abstract}

\begin{keywords}
Neural Emulators \sep MCMC \sep AutoEncoders \sep Uncertainty Quantification \sep Neural ODE \sep Attention Mechanism
\end{keywords}

\maketitle

\section{Introduction}
\label{sec:intro}
Neural networks are increasingly used as emulators, i.e., surrogates for computationally intensive simulation models in physics and chemistry. Beyond accelerating simulations, they can also incorporate complex features that are difficult to encode in traditional solvers. A prominent example is \emph{Physics-Informed Neural Networks} (PINNs), which embed physical laws into the training objective and can serve as fast surrogates using precomputed data~\cite{raissi2019physics,mpinn,stiff-pinn,aurora}. PINNs have been applied to forward and inverse problems in chemical kinetics ~\cite{air_1,air_2,air_3,air_4}, physics ~\cite{physics_1,physics_2,physics_3}, and finance~\cite{finance_1,finance_2,finance_3,finance_4}. Despite these successes, integrating principled \emph{uncertainty quantification} (UQ) into neural emulators remains challenging. A standard approach to UQ is via \emph{Bayesian Neural Networks} (BNNs) ~\cite{bnn,bnn_2}, which treat network weights as random variables and approximate posterior uncertainty using methods such as MCMC, variational inference, or Monte Carlo dropout. While powerful, BNNs can be difficult to scale: the weight space is high-dimensional, selecting informative priors is non-trivial, and approximate posteriors may yield miscalibrated uncertainty, especially in regimes with limited data or model misspecification.

In this work, we propose an alternative, modular approach to Bayesian UQ for neural emulation: the \emph{MCMC Informed Neural Emulator} (MINE). The key idea is to \emph{decouple} Bayesian inference from function approximation. We first perform posterior inference \emph{offline} using MCMC on the original simulator (or any black-box/legacy model) to obtain samples from the parameter posterior \(p(\boldsymbol{\theta}\mid \boldsymbol{y})\). We then train deterministic neural surrogates on \emph{posterior-informed} data, i.e., input--output pairs concentrated in regions of parameter space consistent with the observed data. This approach, related to recent work on training-distribution selection~\cite{guerra2025learning}, directly targets the quantity of interest in Bayesian prediction, without requiring Bayesian architectures at inference time. Concretely, we provide two complementary surrogate components: a quantile (interval) emulator and a forward emulator for operator learning. The two components serve different deployment needs: the quantile emulator provides immediate interval estimates which is particularly useful when deterministic, low-latency uncertainty summaries suffice, while the forward emulator enables fast posterior predictive sampling which is useful for downstream stochastic pipelines.

This paradigm exploits a practical advantage of Bayesian posterior sampling: MCMC concentrates computation on plausible parameter regions instead of exhaustively covering the full parameter space. To illustrate the scale, consider a simulator with a runtime of one second and 20 parameters, each discretized into 10 values. A brute-force grid would require \(10^{20}\) simulator evaluations, i.e.,\ \(10^{20}\) seconds, which is approximately \(3\times 10^{12}\) years. In contrast, a posterior chain with \(10^5\) simulator evaluations takes on the order of a day (about 28 hours at one second per evaluation). The MINE paradigm leverages such posterior samples to reduce the data-generation burden dramatically, while focusing learning on the statistically relevant regions.
The MINE paradigm is \emph{conceptually distinct} from embedding uncertainty directly into the neural network. Unlike BNNs, which attempt to represent uncertainty through probabilistic weight distributions and require posterior approximation in weight space, MINE externalizes uncertainty through Bayesian parameter inference performed offline. Where, e.g., Bayesian PINNs \cite{yang2021b} place Bayesian inference inside the surrogate by treating network weights probabilistically and estimating their posterior, we instead run MCMC offline on the simulator’s physical parameters and then train deterministic emulators on posterior-informed data, thus avoiding weight-space sampling at inference time. This approach is complementary to surrogate-assisted inference methods such as using surrogates to accelerate MCMC (assuming that draws from a prior are possible): here, we assume posterior sampling can be performed offline, with minimal prior knowledge such as positivity constraints only, and then amortize Bayesian prediction through deterministic surrogates trained on posterior-informed data. The MINE paradigm can be understood independently from architectural families such as Neural Ordinary Differential Equations (Neural ODEs)~\cite{ode_1,ode_2,neuralode2,ode_4,ode_5,ode_6}, even though it may utilize these architectures in its realizations. Neural ODEs model continuous-time dynamics via neural-parameterized vector fields and have been used, for example, to emulate chemical reaction systems~\cite{chemiode,chemode_2}. However, standard Neural ODE formulations do not provide Bayesian UQ natively; Bayesian extensions (e.g., Bayesian Neural ODEs or neural SDE variants) incorporate uncertainty internally but typically require sampling at inference time and can introduce additional computational overhead. In contrast, the MINE framework achieves UQ through offline posterior sampling and trains deterministic emulators on posterior-informed data. The forward emulator realization may use a Neural ODE/neural operator backbone, but the UQ mechanism is \emph{architecture-agnostic} and lives outside the network. Relative to PINNs, the MINE framework is generally \emph{equation-free} in training and is particularly suitable when governing equations are unknown, non-differentiable, or when one must interface with black-box simulators. Nevertheless, physics-informed loss terms can be incorporated when available. Finally, compared to Statistics-Informed Neural Networks (SINNs)~\cite{sinn_1,sinn_2}, which aim to reproduce full statistical properties of stochastic trajectories, the MINE paradigm targets Bayesian predictive uncertainty for quantities of interest via posterior-informed training data and produces deterministic outputs at inference time (either conditional on sampled \(\boldsymbol{\theta}\) for posterior draws, or directly as quantiles for intervals).

A central modeling consideration is \emph{what uncertainty is represented}. The MINE framework directly captures \emph{posterior/parameter uncertainty} through \(p(\boldsymbol{\theta}\mid \boldsymbol{y})\) and its propagation to predictions. In addition, any observation noise and stochasticity encoded in the likelihood \(p(\boldsymbol{y}\mid \boldsymbol{\theta})\) can be reflected in the posterior predictive distribution. Like any surrogate approach, MINE also introduces \emph{emulator error}; throughout the paper we therefore evaluate both predictive accuracy and uncertainty calibration to assess how closely the emulators match the offline Bayesian reference.

\paragraph{Contributions.} The main contributions of this work are:
\begin{enumerate}
    \item We formalize the \emph{MCMC Informed Neural Emulator} (MINE) paradigm, which decouples Bayesian posterior inference (via MCMC on a simulator) from deterministic function approximation (via neural networks trained on posterior-informed data). We provide a Wasserstein-based stability analysis showing that the deployment risk of any Lipschitz-bounded emulator is controlled by its training risk plus an explicit penalty proportional to the distribution relative to the deployment distribution, with constants depending only on Lipschitz moduli and second moments. This yields a principled justification that MCMC-informed training, training on (or near) the deployment/posterior law is bound-optimal, and it quantifies how finite-chain posterior approximations degrade performance.
    \item We present two complementary MINE realizations: a \emph{quantile (interval) emulator} that directly learns predictive quantiles to avoid sampling at inference time and a \emph{forward emulator} trained on posterior-informed input--output pairs that enables efficient posterior predictive draws without additional simulator calls. 
    \item For the forward emulator component, we instantiate the paradigm with a concrete, high-performing architecture: an AutoEncoder-based ODE neural network (AEODE) with time embeddings and attention.
\end{enumerate}

We emphasize, however, that MINE is \emph{architecture-agnostic} and can employ alternative neural ODE/neural operator backbones.
We evaluate the proposed framework on two representative examples: (a) a chemical kinetics ODE model with six chemical species and (b) the FaIR simple climate model, which simulates global temperature response to greenhouse gas (GHG) emissions. In addition, we provide experiments comparing AEODE against alternative ODE-style surrogate architectures to motivate our design choices.

The remainder of this paper is structured as follows: Section~\ref{sec:preliminaries} introduces preliminaries on Bayesian UQ with MCMC and presents two running examples. Section~\ref{sec:MCMCInformedTraining} introduces MCMC-informed training for neural emulators including a rigorous mathematical foundation for this framework, and Section~\ref{sec:methodology} details the two MINE realizations. Section~\ref{sec:experiments} describes experimental settings, and Section~\ref{sec:results} presents results on predictive fidelity, calibration, and efficiency. Section~\ref{sec:conclusion} concludes.

\section{Preliminaries}
\label{sec:preliminaries}
\subsection{Uncertainty quantification via Markov chain Monte Carlo}
Markov chain Monte Carlo sampling is a standard workhorse when it comes to quantifying and propagating both parametric and input uncertainties in mathematical models. We begin with a standard Bayesian inverse problem setup: Let $\boldsymbol{\theta} \in \Theta \subset \mathbb{R}^d$ denote the 
physical parameters of a model, $\boldsymbol{y} \in \mathbb{R}^n$ the observed data, and $F: \mathcal{X}\times\Theta \rightarrow \mathbb{R}^n$ the forward model mapping  the (controlled) input factors $\boldsymbol{x}\in\mathcal{X}$ given the parameters $\boldsymbol{\theta}\in\Theta$ to observables. We formulate the \emph{inverse parameter calibration problem} in a statistical setting, applying \emph{Bayes’ Theorem} to obtain a mathematical expression for the \emph{posterior probability density} of the model parameters. This posterior density can be explored numerically to receive the uncertainties. Let us formally  write our model as
$$
\boldsymbol{y}=F(\boldsymbol{x}\mid\boldsymbol{\theta})+\boldsymbol{\varepsilon},
$$
\vspace{-.1cm}
where 
$\boldsymbol{\varepsilon}$ presents the measurement noise. 

The posterior distribution of $\boldsymbol{\theta}$ - that is, those values of $\boldsymbol{\theta}$  that agree with data -  is given by the Bayes' formula
\begin{equation}\label{eqn:posterior}
p(\boldsymbol{\theta}\mid\boldsymbol{y})=\frac{p(\boldsymbol{y}\mid\boldsymbol{\theta})p(\boldsymbol{\theta})}{\int p(\boldsymbol{y}\mid\boldsymbol{\theta})p(\boldsymbol{\theta}) d\boldsymbol{\theta}},
\end{equation}
where $p(\boldsymbol{\theta}\mid\boldsymbol{y})$ denotes the posterior of 
$\boldsymbol{\theta}$ for given data $\boldsymbol{y}$, 
$p(\boldsymbol{y}\mid\boldsymbol{\theta})$ gives the likelihood of $\boldsymbol{y}$ for given $\boldsymbol{\theta}$, and 
$p(\boldsymbol{\theta})$ denotes the prior distribution of $\boldsymbol{\theta}$.
The Bayes' formula gives a framework to solve the inverse parameter calibration problem. The key benefit is that instead of a  single \emph{optimal} fit for the parameter $\boldsymbol{\theta}$ it provides the posterior distribution, \emph{all} parameter values that agree with given data. Consequently, any model prediction can be performed as an ensemble, using various parameters  $\boldsymbol{\theta}$ sampled from the posterior. In this way, the uncertainty of parameter values for model predictions is quantified. However, a direct evaluation of the Bayes' formula is only possible in some trivial cases. Therefore, we generate a representative set of samples $\{\boldsymbol{\theta}^{(i)}\}_{i=1}^N$ from the posterior using a Markov Chain Monte Carlo (MCMC) algorithm. MCMC methods provide a way to computationally implement the Bayes Theorem, i.e., to approximate the posterior distribution of the parameters, without explicitly evaluating the formula. Especially, 
the direct computation of the multidimensional integral is avoided. Instead of trying to sample directly from the unknown posterior distribution, a known \emph{proposal} distribution is used, and the samples from it are either accepted or rejected depending on the values of the current and proposed (non-normalized) likelihood values $p(\boldsymbol{y}\mid\boldsymbol{\theta})p(\boldsymbol{\theta})$.
Also, setting $p(\boldsymbol{\theta})$,  possible prior knowledge of the parameters may require expert knowledge. In our cases, we rely on either obvious physical constraints such as positivity or employ the assumptions used in literature. 
By iteratively repeating the MCMC steps -- propose, accept/reject -- a \emph{chain} of the values $\boldsymbol{\theta}$ is created, that can be proven to converge towards the true parameter posterior. However, the convergence may be impractically slow if the size and shape of the proposal distribution do not align with the underlying true distribution. This issue can be avoided by using adaptive methods that learn how to improve the proposal on the fly.  We use the  Delayed Rejection Adaptive Metropolis (DRAM), \cite{haarioDRAMEfficientAdaptive2006} that employs several multidimensional Gaussian proposal distributions, and updates the covariances of them based on the earlier sampled members in the chain. 
 
Once we have calibrated the model using input-output pairs, such as, e.g., historical inputs $\boldsymbol{x}_{\text{hist}}$ and outputs $\boldsymbol{y}_{\text{hist}}$ in our climate model example,to obtain an MCMC chain 
$\{\boldsymbol{\theta}^{(i)}\}_{i=1}^N$, we are interested in propagating the corresponding parametric uncertainty into the future. That is, for each sample $\boldsymbol{\theta}^{(i)}$ in the chain, we evaluate the forward model $F(\boldsymbol{x}\mid\boldsymbol{\theta}^{(i)})$ for potentially new data points $\boldsymbol{x}$, obtaining the corresponding outputs $\{F(\boldsymbol{x}\mid\boldsymbol{\theta}^{(i)})\}_{i=1}^N$. This corresponds to drawing from the posterior predictive distribution
\begin{equation}
    p(\boldsymbol{y}\mid\boldsymbol{y}_{\text{hist}}, \boldsymbol{x_{\text{hist}}};\boldsymbol{x})=\int p(\boldsymbol{y}\mid\boldsymbol{x},\boldsymbol{\theta})p(\boldsymbol{\theta}\mid\boldsymbol{y_{\text{hist}}}, \boldsymbol{x_{\text{hist}}})d\boldsymbol{\theta}.
    \label{eq:posterior_predictive_distribution}
\end{equation}
In some practical applications, one is interested in drawing from this posterior predictive distribution directly, in others, statistics such as relevant point estimates (e.g., means or medians) and measures of uncertainty (e.g., variances or credible intervals) suffice. Therefore, our framework offers two approaches to emulate: (1) the forward operator $F(\boldsymbol{x}\mid\boldsymbol{\theta})$ combined with (2) relevant quantiles of the posterior predictive distribution. Both emulators are fully deterministic for given $\boldsymbol{\theta}$, whose values can be randomly drawn from the sampled MCMC chain.

\subsection{Running Examples}
\label{subsec:running_examples}
We will use two examples to clarify the use of MCMC sampling as a tool within the MINE framework for physical models: a low-dimensional system from chemical kinetics, and a moderately high-dimensional simple climate model. While the former is a standard test example, the latter is a nice example for practically relevant high-dimensional settings while still being tractable on a standard computer. 

\subsubsection{Chemical Kinetics Model}

Consider the kinetic system between six chemical species $A,B,C,D,E,F$:
\begin{eqnarray}
 A + B \rightarrow C + F \\
 A + C \rightarrow D + F \\
 A + D \rightarrow E + F 
 \end{eqnarray}
where the reaction rates of the three reactions have to be estimated by the concentration data available of the component $A$. This example, called 'Himmel' in the following, is taken from \cite{ProcessAnalysisHimmelblau}, also available as a demo example of the MCMC library package in Matlab \footnote{https://github.com/mjlaine/mcmcstat}.

By denoting the three reaction constants as named $\theta_1,\theta_2,\theta_3$, we can write the respective ODE system as 

\begin{align}
\frac{dA}{dt} &= -\theta_1 AB - \theta_2 AC - \theta_3 AD \\
\frac{dB}{dt} &= -\theta_1 AB \\
\frac{dC}{dt} &= \theta_1 AB - \theta_2 AC \\
\frac{dD}{dt} &= \theta_2 AC - \theta_3 AD \\
\frac{dE}{dt} &= \theta_3 AD
\end{align}

\begin{figure}[pos=t]
	\centering
        \centerline{\includegraphics[width=.86\textwidth]{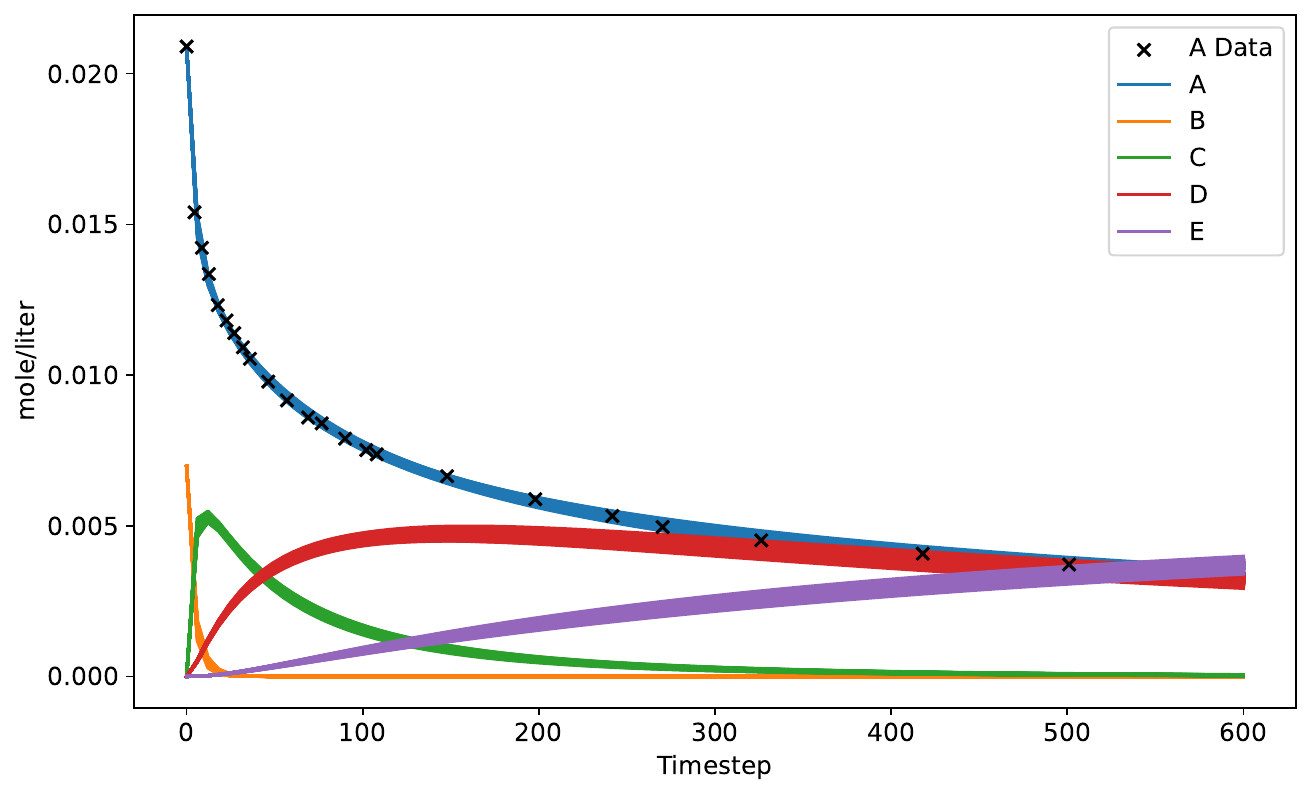}}
        	   	\captionsetup{font=small}
	\caption{\textbf{Himmel data fitting.}
		}
	\label{fig:himmelblau_fit}
\end{figure}

\begin{figure}[pos=t]
	\centering
		\centerline{\includegraphics[width=.86\textwidth]{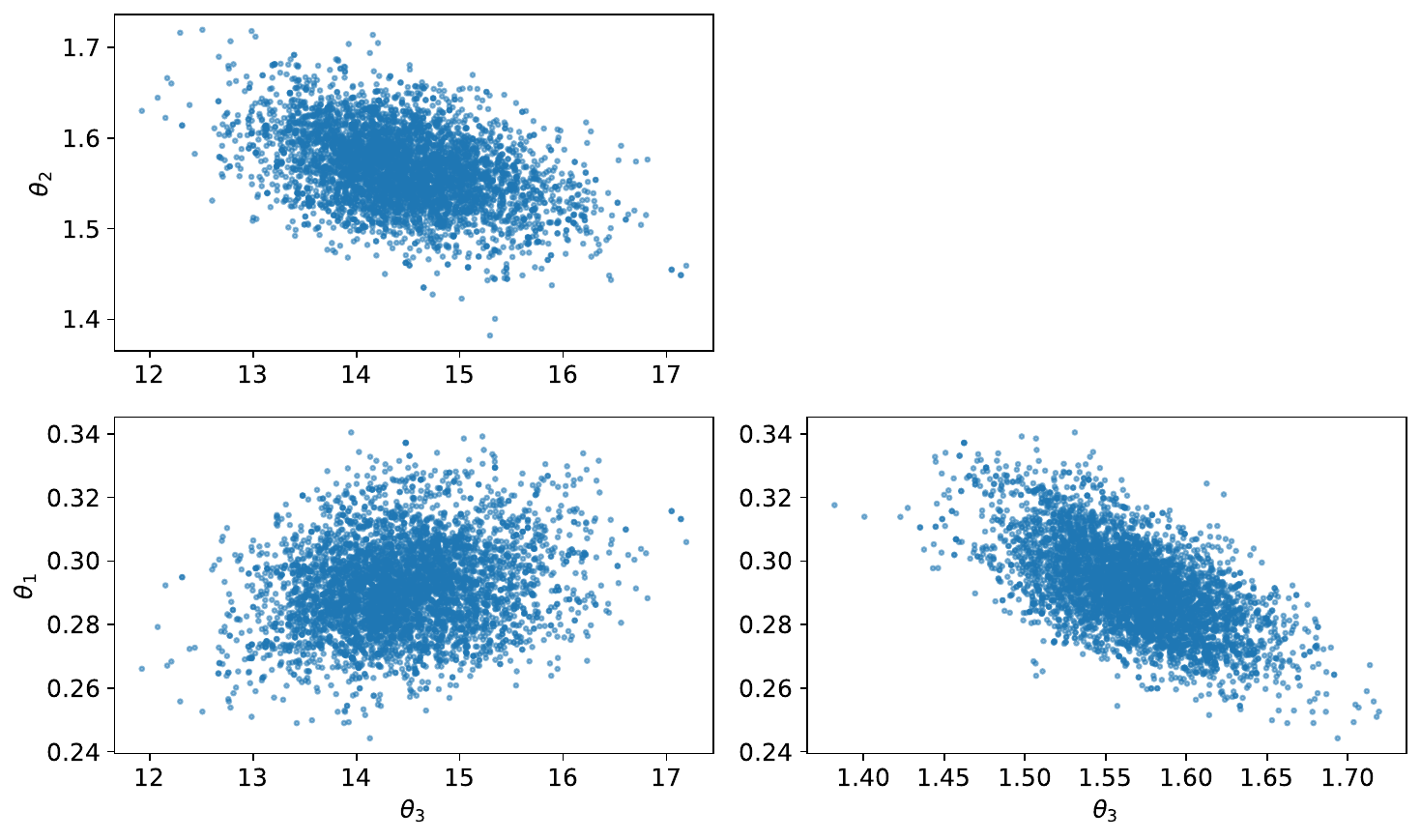}}
	   	\captionsetup{font=small}
            \caption{\textbf{2D scatter of Himmel parameters.}
		}
		\label{fig:himmel_posterior}
\end{figure}

Figure~\ref{fig:himmelblau_fit} shows how the model fits the data. In addition to the standard least squares fit it also shows the model solutions accepted by the MCMC sampling. The measurement noise is assumed to be i.i.d. Gaussian, with the noise variance estimated by the residuals of the fit. We see how the accepted model solutions agree with noise, i.e., they roughly just cover the measured concentration values.  Figure~\ref{fig:himmel_posterior} presents the pairwise 2D scatter plots of the parameters, as obtained by running an MCMC chain of length 5000 samples.

In the context of our Bayesian inverse problem setup $F(\boldsymbol{x}\mid\boldsymbol{\theta})$ is the forward operator corresponding to the ODE which returns the concentration of chemicals $A$ through $F$ in time $t$, namely $\boldsymbol{y}$. This forward operator $F(\boldsymbol{x}\mid\boldsymbol{\theta})$ has the input parameters $\boldsymbol{x}$ that contain the initial concentrations of the six chemical species $A$ through $F$, and  $\boldsymbol{\theta}$ which holds the reaction constants $\theta_1,\theta_2,\theta_3$.

\subsubsection{The FaIR Simple Climate Model}
\label{subsec:fair}

\begin{figure}[pos=t]
    \centering
    \includegraphics[width=1\textwidth]{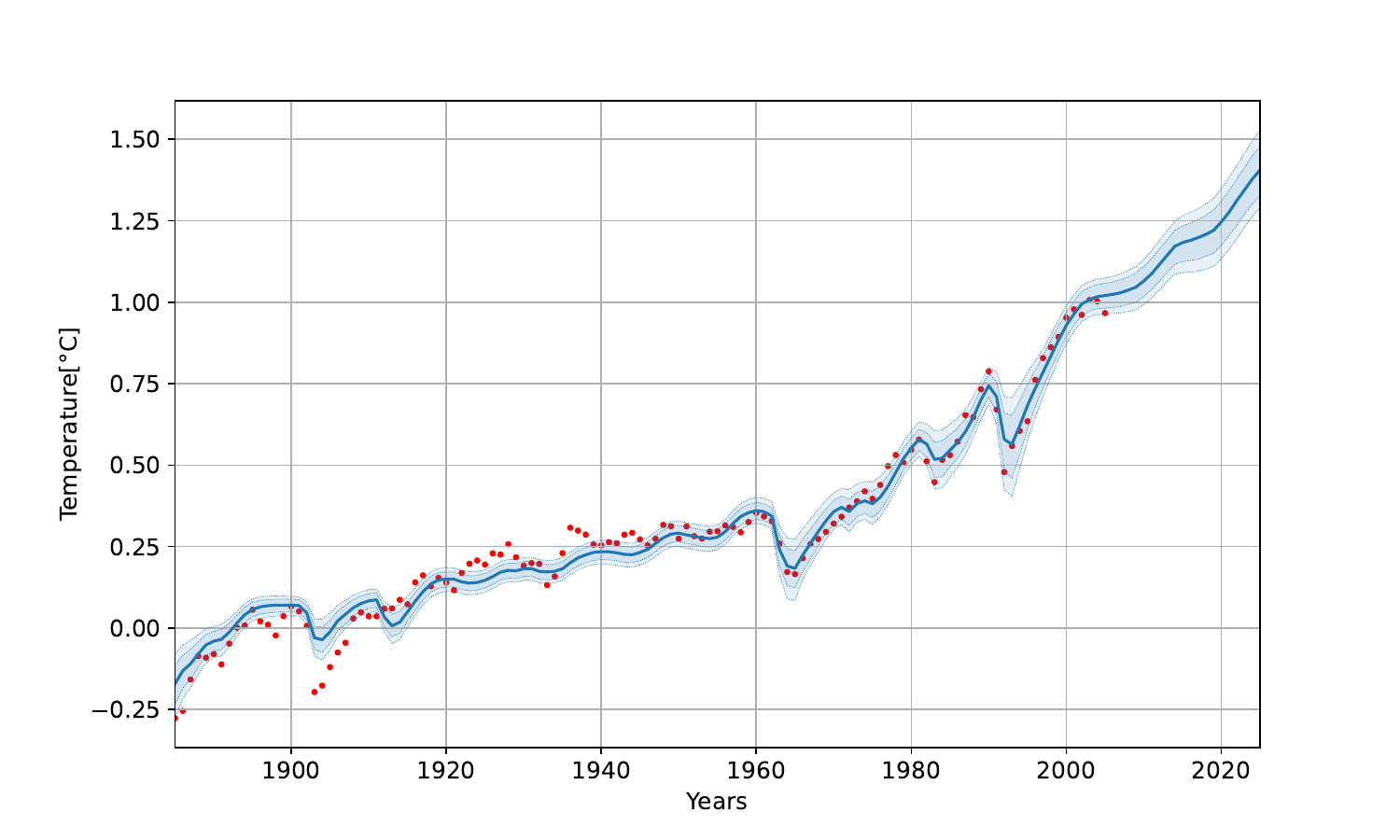}
    \caption{\textbf{FaIR calibration results (until 2005) and sampling from posterior predictive distribution (from 2005).} With historical temperature measurements in red, to the median, 90\% credible interval and 99\% credible interval in blue.}
    \label{fig:fair_data_fit}
\end{figure}

The Finite Amplitude Impulse Response (FaIR) 
model, see \cite{smith2018fair, gmd-14-3007-2021}, is a surrogate-type so-called \emph{simple} climate model, which aims to imitate the key features of more complicated earth system models. More precisely, the increase in temperature due to greenhouse gas emissions is modeled via a simplified carbon cycle in four main carbon reservoirs, namely geological, deep ocean, biosphere, and ocean mixed layer. We focus in this work on the FaIR version 1.6, cf. \cite{smith2018fair}, where the model dynamics are driven by annual greenhouse gas emissions as the primary input. For each year, the atmospheric concentrations of greenhouse gases are computed by accounting for their accumulation or decay, which results from exchanges with the aforementioned carbon cycle reservoirs, acting either as sinks or sources. These processes unfold over different timescales, depending on the specific reservoir involved and the prevailing temperature.

\noindent The governing equations for the amounts $R_i$ of a certain greenhouse gas (GHG) in each reservoir are given by
$$
\frac{\mathrm d R_i}{\mathrm{d} t}=a_iE_{\text{GHG}}(t)-\frac{R_i}{\alpha(T(t))\tau_i},\quad i=1,...,4,
$$
where $E_{\text{GHG}}(t)$ depicts the emission at time $t$, which are the core input factors to the model; $a_i$ contains the reservoir fractions ($\sum_{i=1}^4 a_i=1$); $\tau_i$ represents the lifespan of GHG in reservoir $i$; and $\alpha(\cdot)$ regulates the timescale adjustment through accounting for the temperature dependence of lifespan in each reservoir.

FaIR also calculates non-CO$_2$ greenhouse gas concentrations from emissions, aerosol forcing from aerosol precursor emissions, tropospheric and stratospheric ozone forcing from precursor emissions, and forcings from black carbon on snow, stratospheric methane oxidation to water vapor, contrails, and change in land use. Forcings from volcanic eruptions and solar irradiance fluctuations are supplied externally. These forcings are then converted to a temperature change. 

The model comprises a total of 20 physical parameters, each controlling specific aspects of the carbon cycle, radiative forcing, and climate response processes. In order to generate reliable climate projections, the FaIR model must be calibrated against historical data, including atmospheric greenhouse gas concentrations and global mean surface temperatures. The calibration process involves estimating the values of the model parameters such that the model output best fits the observed data over a historical time span. A key challenge in this process is the quantification of \emph{parameter uncertainty}. If uncertainties in parameter estimates are ignored or underestimated, the model may exhibit a systematic bias toward higher or lower temperature responses, commonly referred to as ``hot'' or ``cold'' biases, respectively. Such biases can distort projections and misinform policy decisions. To address this critical issue, a Bayesian parameter calibration framework has been proposed in \cite{weichel2024uncertaintyquantificationportfoliotemperature}. It incorporates prior knowledge and enables rigorous quantification of uncertainty in the parameter estimates. 

In the context of our Bayesian inverse problem setup $\boldsymbol{\theta} \in \mathbb{R}^{20}$ is the vector of the physical FaIR parameters to be estimated, $F(\boldsymbol{x}\mid\boldsymbol{\theta})$ is the forward model with the output $\boldsymbol{y} \in \mathbb{R}^n$ which denotes the vector of observed temperature anomalies at discrete time points $t_1, \ldots, t_n$ in the future, given the emission vector $\boldsymbol{x}\in \mathbb{R}^n$ at these time points. 

\paragraph{Parameterizing the emission pathway for scenario predictions}
In this work we are interested in quantifying both the parameter and input emission uncertainty. Therefore, we parametrize the emission input through two parameters, namely $E_0$, the absolute amount of GHG emissions in the base year, and $s$, the emission pathway for the following years. $s$ could be one of the SSP-RCP scenarios, which are a comprehensive set of plausible emission scenarios based on several socioeconomic narratives for the future \cite{o2014new}. In practical applications, we are interested in drawing from the corresponding posterior predictive distribution, cf. Equation~(\ref{eq:posterior_predictive_distribution}) given by
\begin{equation}
p(y\mid \boldsymbol{y}_{\text{hist}}, \boldsymbol{x}_{\text{hist}}; \boldsymbol{x}=g(s,E_0))=\int p(y\mid\boldsymbol{x}, \boldsymbol{\theta})\;p(\boldsymbol{\theta}\mid\boldsymbol{y}_{\text{hist}},\boldsymbol{x}_{\text{hist}})\;d\boldsymbol{\theta}
\end{equation}\label{eqn:post_prd_FAIR}
where $\boldsymbol{y}_{\text{hist}}$ are the historical temperature observations, $\boldsymbol{x}_{\text{hist}}$ the historical emissions, and $g(s,E_0)$ is a known function, which parameterizes the emission vector $\boldsymbol{x}$ into scenario $s$ and the base-year emissions $E_0$. As we will use separable Hilbert spaces later on, we embed the categorical variable $s$ into $\mathbb{R}^k$, e.g. via one-hot or scalar encoding.

Let us illustrate this setting, which is practically relevant, e.g., in financial climate risk management, see \cite{bourgey2024efficient, weichel2024uncertaintyquantificationportfoliotemperature}, using Figure~\ref{fig:fair_data_fit} below: To determine the posterior distribution of the FaIR parameters, $p(\boldsymbol{\theta}\mid\boldsymbol{y}_{hist},\boldsymbol{x}_{hist})$, a DRAM MCMC calibration as described in Section~\ref{sec:preliminaries} has been applied. The Figure~\ref{fig:fair_data_fit} shows the median, the 90\% credible interval, and the 99\% interval of the posterior predictive distribution and the data points the parameters were fitted to. The chains of the calibration can be seen in Figure~\ref{fig:fair_chains} in Appendix \ref{app:chains_fair}. For some use cases, one is interested in generating temperature pathways for given input parameters, whereas in other cases, one is merely interested in the spread of the distribution at a discrete point in the future, say, in the year 2100 in accordance with the Paris agreement.

\section{MCMC Informed Training}\label{sec:MCMCInformedTraining}

Our aim is to construct surrogate models that (i) emulate the underlying physical forward model and
(ii) reproduce the uncertainty summaries induced by the Bayesian posterior over parameters. In the
simple chemical kinetics example with fixed initial conditions shown in Figure~\ref{fig:himmelblau_fit}, the first task is
particularly transparent: we draw parameter samples $\{\boldsymbol{\theta}^{(s)}\}$ from the posterior, evaluate the
simulator to obtain concentration values $(A_t,\ldots,E_t)$ at time points of interest $t$, and fit a
regression model that maps the parameters to concentrations, i.e.,
$\boldsymbol{\theta} \mapsto (A_t,\ldots,E_t)$ (independently for each $t$ in Figure~\ref{fig:himmelblau_fit}). Since this example has only
three input parameters, even a second-degree polynomial provides an excellent fit (with $R^2 \approx
0.99$) across all positive concentrations. 
In addition to forward emulation, we also consider learning posterior-informed \emph{point estimates},
such as conditional means, as a function of the initial state. Here, the input is the vector of
initial concentrations $(A_0,\ldots,E_0)$, and the target is the concentration of a selected chemical
species at time $t$. Training pairs $\{(\boldsymbol{x}^{(i)},y^{(i)})\}_{i=1}^n$ are generated by sampling initial conditions
$\boldsymbol{x}^{(i)}$ independently from relevant intervals, sampling $\boldsymbol{\theta}$ independently from the posterior distribution, and
running the ODE system to obtain the corresponding output $y^{(i)}$ at time $t$. This yields a direct
mapping from initial concentrations to posterior-informed predictions at time $t$, which again can be
captured by simple regression using MSE loss in this low-dimensional example. Thus, in low-dimensional settings, a simple regressor can
already serve as an MCMC Informed forward surrogate, whereas more elaborate methods are required as the input dimension increases.

The FaIR model is our running example for such a moderately high-dimensional setting: When training an emulator of the forward model, taking $\boldsymbol{\theta}$ and $\boldsymbol{x}$ as inputs, we aim to reproduce how the model behaves under uncertainty in both input emissions and model parameters. As we have already discussed, these uncertainties come from two sources: input uncertainty, i.e., different initial values for the base-year emissions $E_0$ and plausible future emission pathways $s$. Together, these define the input distribution $p(\boldsymbol{x})$, describing which parts of the emission space we consider likely or relevant. The second source of uncertainty is parameter uncertainty, encoded into the distribution $p(\boldsymbol{\theta}\mid\boldsymbol{y}_{\text{hist}},\boldsymbol{x}_{\text{hist}})$ of model parameters and obtained from MCMC calibration against historical temperature and emission data.
The most favorable case is when the statistical conditions used to generate training data closely match those encountered at deployment.
When we have a credible target distribution for the inputs and parameters the emulator will face in practice---which we refer to as the deployment law $\nu_{\mathrm{dep}}$---it is natural to use this information to guide training-data generation.
In the FaIR setting, conditional on a fixed emissions scenario, the uncertainty in future model outputs induced by parameter uncertainty is naturally represented by the posterior predictive distribution.
Accordingly, drawing training samples from the calibrated posterior (or an approximation thereof) concentrates training on high-probability regions of the intended predictive setting and reduces reliance on extrapolation at prediction time.
In this sense, the MCMC posterior obtained from calibration provides a practical proxy for the deployment environment. In practice, however, we only approximate the true posterior using a finite MCMC chain, and the distribution of emissions scenarios at prediction time may be uncertain or application-dependent.
As a result, the training distribution may differ from the true deployment distribution (e.g., due to incomplete MCMC mixing or unbalanced scenario coverage), and performance can degrade under distribution shift. To quantify this effect, we provide a Wasserstein-based stability bound under standard regularity assumptions (Lipschitz continuity and finite second moments). 
The Lipschitz theory for out-of-distribution error bounds has been studied recently in \cite{guerra2025learning} and the theory provided in this section is inspired by this work. The proofs for all of the results stated in the remainder of this section can be found in the \ref{app:mathanalysis}.
Let $(\mathcal{X},\langle\cdot,\cdot\rangle_{\mathcal{X}}),(\Theta,\langle\cdot,\cdot\rangle_{\Theta}),(\mathcal{Y},\langle\cdot,\cdot\rangle_{\mathcal{Y}})$ be separable Hilbert spaces and set

$$
\mathcal{U}:=\mathcal{X}\times\Theta,
\qquad
\|(\boldsymbol{x},\boldsymbol{\theta})\|_{\mathcal{U}}^2=\|\boldsymbol{x}\|_{\mathcal{X}}^2+\|\boldsymbol{\theta}\|_{\Theta}^2.
$$
Assume the (true) forward model $F:\mathcal{U}\to \mathcal{Y}$ is Lipschitz,
$$
\|F(u)-F(v)\|_{\mathcal{Y}}\le L\,\|u-v\|_{\mathcal{U}}
\quad\text{for all }u,v\in\mathcal{U},
\qquad
\|F(0)\|_{\mathcal{Y}}<\infty,
$$
and that the hypothesis class $\mathcal{H}$ satisfies, for every emulator $\mathcal{E}\in\mathcal{H}$,
$$
\operatorname{Lip}(\mathcal{E})\le R,
\qquad
\|\mathcal{E}(0)\|_{\mathcal{Y}}\le B.
$$
For $\nu\in\mathcal{P}_2(\mathcal{U})$ (the set of probability measures on $\mathcal{U}$ with finite second moment),
define the squared-error risk
$$
\mathcal{R}_{\nu}(\mathcal{E}):=\mathbb{E}_{u\sim\nu}\,\|F(u)-\mathcal{E}(u)\|_{\mathcal{Y}}^2.
$$
Let $W_2$ denote the $2$–Wasserstein distance on $\mathcal{U}$ equipped with the metric $\|\cdot\|_{\mathcal{U}}$; see Villani~\cite{villani2009optimal} or Santambrogio~\cite{santabrogio2015optimal} for definitions and properties. Set
$$
C_1:=L+R,
\qquad
C_2:=\|F(0)\|_{\mathcal{Y}}+B,
$$
and for $\nu,\nu'\in\mathcal{P}_2(\mathcal{U})$ define
$$
c(\nu,\nu'):=C_1^2\,\sqrt{\,2\big(\mathbb{E}_{\nu}\|u\|_{\mathcal{U}}^2+\mathbb{E}_{\nu'}\|u\|_{\mathcal{U}}^2\big)}\;+\;2C_1C_2.
$$
Finally, for a prescribed deployment law $\nu_{\mathrm{dep}}\in\mathcal{P}_2(\mathcal{U})$ define the bound objective
$$
J(\nu):=\inf_{\mathcal{E}\in\mathcal{H}}\;\mathcal{R}_{\nu}(\mathcal{E})\;+\;c(\nu,\nu_{\mathrm{dep}})\,W_2(\nu,\nu_{\mathrm{dep}}).
$$

\begin{lemma}
\label{lem:shift}
For any $\nu,\nu'\in\mathcal{P}_2(\mathcal{U})$ and any $\mathcal{E}\in \mathcal{H}$,
\begin{equation}
\mathcal{R}_{\nu'}(\mathcal{E})\ \le\ \mathcal{R}_{\nu}(\mathcal{E})\;+\;c(\nu,\nu')\,W_2(\nu,\nu').
\end{equation}
\end{lemma}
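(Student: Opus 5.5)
Set $g(u):=\|F(u)-\mathcal{E}(u)\|_{\mathcal{Y}}^2$, so that $\mathcal{R}_\nu(\mathcal{E})=\mathbb{E}_\nu g$. The plan is to show that $g$ is \emph{locally} Lipschitz with a linearly growing modulus, and then integrate this bound against an optimal $W_2$ coupling of $\nu$ and $\nu'$. Concretely, write $h:=F-\mathcal{E}$. Then $h$ is $C_1$-Lipschitz with $C_1=L+R$, and $\|h(0)\|_{\mathcal{Y}}\le C_2$, so
\[
\|h(u)\|_{\mathcal{Y}}\le C_2+C_1\|u\|_{\mathcal{U}} .
\]
Using $a^2-b^2=(a-b)(a+b)$ together with the reverse triangle inequality, for all $u,v\in\mathcal{U}$ we get
\[
|g(u)-g(v)|\le \|h(u)-h(v)\|\,(\|h(u)\|+\|h(v)\|)\le C_1\|u-v\|\bigl(2C_2+C_1(\|u\|+\|v\|)\bigr).
\]

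Next, let $\pi$ be an optimal coupling of $(\nu,\nu')$ for $W_2$; it exists because $\mathcal{U}$ is a Polish space and both measures lie in $\mathcal{P}_2$. Then
\[
\mathcal{R}_{\nu'}-\mathcal{R}_\nu=\int (g(v)-g(u))\,d\pi \le 2C_1C_2\int\|u-v\|\,d\pi+C_1^2\int\|u-v\|(\|u\|+\|v\|)\,d\pi .
\]
For the first term, Jensen gives $\int\|u-v\|\,d\pi\le W_2(\nu,\nu')$. For the second, Cauchy--Schwarz in $L^2(\pi)$ gives
\[
\int\|u-v\|(\|u\|+\|v\|)\,d\pi\le W_2(\nu,\nu')\Bigl(\int(\|u\|+\|v\|)^2d\pi\Bigr)^{1/2},
\]
and the elementary inequality $(a+b)^2\le 2(a^2+b^2)$ bounds the last factor by $\sqrt{2(\mathbb{E}_\nu\|u\|^2+\mathbb{E}_{\nu'}\|u\|^2)}$. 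Adding the two terms reproduces exactly $c(\nu,\nu')\,W_2(\nu,\nu')$.

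The computation itself is routine. The main points needing care are measure-theoretic. First, if $\mathcal{R}_\nu(\mathcal{E})=\infty$ the bound is trivial, and in any case $g$ has quadratic growth, so it is integrable under any $\mathcal{P}_2$ measure and subtracting the two risks is legitimate. Second, existence of an optimal coupling in a separable Hilbert space is standard (Villani). If one prefers to avoid it, the same argument works with an $\varepsilon$-optimal coupling, followed by letting $\varepsilon\to0$.
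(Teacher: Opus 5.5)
Your proposal is correct and follows the paper's proof essentially step for step. Both establish the same local Lipschitz bound $|g(u)-g(v)|\le C_1\|u-v\|_{\mathcal{U}}\bigl(2C_2+C_1(\|u\|_{\mathcal{U}}+\|v\|_{\mathcal{U}})\bigr)$, integrate it against a coupling, and split the result via Cauchy--Schwarz and $(a+b)^2\le 2(a^2+b^2)$. The differences are cosmetic: you use the reverse triangle inequality where the paper uses $\|a\|^2-\|b\|^2=\langle a+b,a-b\rangle$, and you fix an optimal coupling where the paper takes the infimum over arbitrary couplings at the end.
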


By Lemma \ref{lem:shift}, a smaller Wasserstein distance implies a smaller upper bound on the increase in expected risk attributable to distribution mismatch. Note, however, that although the bound scales linearly with $W_2(\nu,\nu_{\mathrm{dep}})$, the prefactor $c(\nu,\nu_{\mathrm{dep}})$ depends on the second moments of $\nu$ and $\nu_{\mathrm{dep}}$, so the risk-shift guarantee is most informative when these moments are uniformly controlled (e.g., on a bounded operating domain). The following Proposition upper-bounds the best achievable deployment risk:

\begin{proposition}[Bound-optimality at the deployment law]
\label{prop:opt}
For any $\nu\in\mathcal{P}_2(\mathcal{U})$,
\begin{equation}
J(\nu)\ \ge\ J(\nu_{\mathrm{dep}})\ =\ \inf_{\mathcal{E}\in\mathcal{H}}\,\mathcal{R}_{\nu_{\mathrm{dep}}}(\mathcal{E}),
\end{equation}
with equality at $\nu=\nu_{\mathrm{dep}}$. Moreover,
\begin{equation}
0\ \le\ J(\nu)-J(\nu_{\mathrm{dep}})
\ \le\ 2\,c(\nu,\nu_{\mathrm{dep}})\,W_2(\nu,\nu_{\mathrm{dep}}).
\end{equation}
\end{proposition}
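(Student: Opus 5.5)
The plan is to combine Lemma~\ref{lem:shift} with the fact that taking an infimum over $\mathcal{H}$ preserves inequalities that hold uniformly in $\mathcal{E}$.

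\emph{Value at the deployment law.} Since $W_2(\nu_{\mathrm{dep}},\nu_{\mathrm{dep}})=0$ and $c(\nu_{\mathrm{dep}},\nu_{\mathrm{dep}})<\infty$ (because $\nu_{\mathrm{dep}}\in\mathcal{P}_2(\mathcal{U})$), the penalty term vanishes. This gives $J(\nu_{\mathrm{dep}})=\inf_{\mathcal{E}\in\mathcal{H}}\mathcal{R}_{\nu_{\mathrm{dep}}}(\mathcal{E})$, which is the claimed equality.

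\emph{Lower bound $J(\nu)\ge J(\nu_{\mathrm{dep}})$.} Apply Lemma~\ref{lem:shift} with $\nu'=\nu_{\mathrm{dep}}$. For every $\mathcal{E}\in\mathcal{H}$,
$$
\mathcal{R}_{\nu_{\mathrm{dep}}}(\mathcal{E})\le \mathcal{R}_{\nu}(\mathcal{E})+c(\nu,\nu_{\mathrm{dep}})\,W_2(\nu,\nu_{\mathrm{dep}}).
$$
The penalty does not depend on $\mathcal{E}$, so taking the infimum over $\mathcal{H}$ on both sides yields $J(\nu_{\mathrm{dep}})\le J(\nu)$.

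\emph{Upper bound on the gap.} Apply Lemma~\ref{lem:shift} in the reverse direction, i.e.\ with the roles $(\nu_{\mathrm{dep}},\nu)$:
$$
\mathcal{R}_{\nu}(\mathcal{E})\le \mathcal{R}_{\nu_{\mathrm{dep}}}(\mathcal{E})+c(\nu_{\mathrm{dep}},\nu)\,W_2(\nu_{\mathrm{dep}},\nu).
$$
Two symmetry facts are needed here. First, $c$ is symmetric in its arguments, since it depends only on the sum of the two second moments. Second, $W_2$ is symmetric. Taking the infimum over $\mathcal{E}$ gives
$$
\inf_{\mathcal{E}\in\mathcal{H}}\mathcal{R}_{\nu}(\mathcal{E})\le J(\nu_{\mathrm{dep}})+c(\nu,\nu_{\mathrm{dep}})\,W_2(\nu,\nu_{\mathrm{dep}}).
$$
Adding the penalty $c(\nu,\nu_{\mathrm{dep}})\,W_2(\nu,\nu_{\mathrm{dep}})$ to both sides gives $J(\nu)\le J(\nu_{\mathrm{dep}})+2c\,W_2$, which is the stated bound.

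The only point needing care is finiteness. The infima are finite because the risks are finite for any $\mathcal{E}\in\mathcal{H}$: the bound $\|F(u)-\mathcal{E}(u)\|\le C_1\|u\|+C_2$ and finite second moments ensure this, so the subtractions are legitimate. Otherwise the argument is routine once Lemma~\ref{lem:shift} is available.
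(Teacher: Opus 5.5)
Your proof is correct and follows essentially the paper's argument: Lemma~\ref{lem:shift} with $(\nu,\nu_{\mathrm{dep}})$ gives the lower bound, and the reverse application (using the symmetry of $c$ and $W_2$) gives the $2c\,W_2$ gap. The one difference is that the paper evaluates at a minimizer $\mathcal{E}^\dagger\in\arg\inf_{\mathcal{E}}\mathcal{R}_{\nu_{\mathrm{dep}}}(\mathcal{E})$, which need not exist, whereas you pass the uniform inequality directly through the infimum and so avoid assuming that the infimum is attained.
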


So, if we know or can approximate $\nu_{\mathrm{dep}}$, then training distribution matching is bound-optimal.
In the multi-scenario setting where the deployment environment selects scenario laws $\{\nu_k\}_{k=1}^K$ with probabilities $\{w_k\}_{k=1}^K$,
the resulting deployment law is the mixture $\nu_{\mathrm{mix}}=\sum_k w_k\nu_k$, and the same analysis motivates training on $\nu_{\mathrm{mix}}$: 

\begin{corollary}\label{thm:bo}
Over all $\nu\in\mathcal{P}_2(\mathcal{U})$,  $\nu_{\text{mix}}$ is a minimizer of $J(\nu)$, and
\begin{equation}
\inf_{\nu}J(\nu)=\inf_{\mathcal{E}\in\mathcal{H}}\mathcal{R}_{\nu_{\text{mix}}}(\mathcal{E}),\quad 0\leq J(\nu)-J(\nu_{\text{mix}})\leq 2 c(\nu,\nu_{\text{mix}})W_2(\nu,\nu_{\text{mix}}).
\end{equation}
\end{corollary}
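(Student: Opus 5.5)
The plan is to obtain the corollary as a direct instance of Proposition~\ref{prop:opt}, taking $\nu_{\mathrm{dep}}:=\nu_{\mathrm{mix}}$. The only thing to check is that $\nu_{\mathrm{mix}}=\sum_{k=1}^K w_k\nu_k$ is an admissible deployment law, i.e.\ that $\nu_{\mathrm{mix}}\in\mathcal{P}_2(\mathcal{U})$.

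First, I will assume, as implicit in the multi-scenario setting, that each scenario law satisfies $\nu_k\in\mathcal{P}_2(\mathcal{U})$, with $w_k\ge 0$ and $\sum_k w_k=1$. Then $\nu_{\mathrm{mix}}$ is a probability measure on $\mathcal{U}$ (a convex combination of probability measures). By linearity of the integral in the measure,
\[
\mathbb{E}_{\nu_{\mathrm{mix}}}\|u\|_{\mathcal{U}}^2=\sum_{k=1}^K w_k\,\mathbb{E}_{\nu_k}\|u\|_{\mathcal{U}}^2<\infty .
\]
Hence $\nu_{\mathrm{mix}}\in\mathcal{P}_2(\mathcal{U})$. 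This also ensures that $c(\nu,\nu_{\mathrm{mix}})$ is finite for every $\nu\in\mathcal{P}_2(\mathcal{U})$, so the objective $J$ defined with $\nu_{\mathrm{dep}}=\nu_{\mathrm{mix}}$ is well defined.

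Next, I apply Proposition~\ref{prop:opt} with $\nu_{\mathrm{dep}}=\nu_{\mathrm{mix}}$. For every $\nu\in\mathcal{P}_2(\mathcal{U})$ it gives
\[
J(\nu)\ge J(\nu_{\mathrm{mix}})=\inf_{\mathcal{E}\in\mathcal{H}}\mathcal{R}_{\nu_{\mathrm{mix}}}(\mathcal{E}),
\]
which is exactly the statement that $\nu_{\mathrm{mix}}$ minimizes $J$ over $\mathcal{P}_2(\mathcal{U})$. Since $\nu_{\mathrm{mix}}$ itself lies in $\mathcal{P}_2(\mathcal{U})$, the infimum is attained, so $\inf_\nu J(\nu)=J(\nu_{\mathrm{mix}})$. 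The two-sided estimate $0\le J(\nu)-J(\nu_{\mathrm{mix}})\le 2c(\nu,\nu_{\mathrm{mix}})W_2(\nu,\nu_{\mathrm{mix}})$ is the second display of Proposition~\ref{prop:opt}, read with $\nu_{\mathrm{dep}}=\nu_{\mathrm{mix}}$.

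I do not expect a real obstacle: the argument is a specialization. The one point needing care is the membership $\nu_{\mathrm{mix}}\in\mathcal{P}_2(\mathcal{U})$, which requires the standing assumption that the scenario laws have finite second moments. This is also where the identity $W_2(\nu_{\mathrm{mix}},\nu_{\mathrm{mix}})=0$ used inside Proposition~\ref{prop:opt} is legitimate. I would also remark that the mixture structure itself plays no further role: the bound is evaluated against $\nu_{\mathrm{mix}}$ as a single deployment law. No convexity or joint-convexity property of $W_2$ with respect to the decomposition is needed, although one could add such a property to bound $W_2(\nu,\nu_{\mathrm{mix}})$ by $\sum_k w_k W_2(\nu,\nu_k)$-type quantities if desired.
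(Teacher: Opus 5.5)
Your proposal is correct and matches the paper's approach. The paper gives no separate proof; it obtains the corollary exactly by specializing Proposition~\ref{prop:opt} to $\nu_{\mathrm{dep}}=\nu_{\mathrm{mix}}$, and your explicit check that $\nu_{\mathrm{mix}}\in\mathcal{P}_2(\mathcal{U})$ (by linearity of the second moment in the measure, assuming each $\nu_k\in\mathcal{P}_2(\mathcal{U})$) fills in a detail the paper leaves implicit.
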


Finally, when the posterior is approximated by an empirical MCMC measure $\widehat{\pi}_N$, we obtain an explicit bound on the degradation in $J$
in terms of $W_2(\widehat{\pi}_N,\pi)$, which vanishes as the empirical chain converges in Wasserstein distance under standard conditions: The posterior distribution 
$\pi(\boldsymbol{\theta}\mid\boldsymbol{y}_{\text{hist}},\boldsymbol{x}_{\text{hist}})$
cannot be represented exactly but is approximated by a finite empirical measure obtained from 
an MCMC simulation.
Consequently, the joint training distribution 
$\nu=\rho(\boldsymbol{x})\otimes \pi(\boldsymbol{\theta}\mid\boldsymbol{y}_{\text{hist}},\boldsymbol{x}_{\text{hist}})$
is replaced by its empirical counterpart
\begin{equation}\label{eqn:empirical}
\widehat{\nu}_N(\boldsymbol{x},\boldsymbol{\theta})
= \rho(\boldsymbol{x})\otimes\widehat{\pi}_N(\boldsymbol{\theta}),
\qquad
\widehat{\pi}_N(\boldsymbol{\theta})
=\frac{1}{N}\sum_{i=1}^{N}\delta_{\boldsymbol{\theta}^{(i)}},
\end{equation}
where $\{\boldsymbol{\theta}^{(i)}\}_{i=1}^{N}$ are the posterior samples generated by the chain. This substitution raises a natural question: 
\emph{how close is the finite-chain training law $\widehat{\nu}_N$ to the ideal deployment law 
$\nu_{\mathrm{dep}}$ and how much does this discrepancy affect the bound $J(\nu)$?}
To answer this, we next derive a quantitative extension of the previous results, 
showing that training on a finite MCMC chain remains 
asymptotically optimal, that is, with an approximation error that vanishes as the chain converges to the true posterior in Wasserstein distance. 
This analysis formalizes the intuition that the quality of MINE training depends primarily 
on how well the MCMC samples represent the underlying posterior distribution.

In the finite-chain analysis below, we take as deployment law the \emph{ideal} (infinite-chain) joint law
\begin{equation}\label{eq:ideal_dep_law}
\nu \;:=\; \rho\otimes\pi \in \mathcal P_2(\mathcal U),
\qquad\text{i.e.}\qquad \nu_{\mathrm{dep}}:=\nu,
\end{equation}
so that $J_{\nu}(\nu)=\inf_{\mathcal E\in\mathcal H}\mathcal R_{\nu}(\mathcal E)$ (since $W_2(\nu,\nu)=0$).

\begin{lemma}[Reduction of $W_2$ for product measures with a common marginal]
\label{lem:W2_product_reduction}
Let $\rho\in\mathcal P_2(\mathcal X)$ and $\pi,\widehat\pi\in\mathcal P_2(\Theta)$.
Equip $\mathcal U=\mathcal X\times\Theta$ with the product norm
$\|(\boldsymbol{x},\boldsymbol{\theta})\|_{\mathcal U}^2=\|\boldsymbol{x}\|_{\mathcal X}^2+\|\boldsymbol{\theta}\|_\Theta^2$.
Then
\begin{equation}\label{eq:W2_product_reduction}
W_2(\rho\otimes\pi,\ \rho\otimes\widehat\pi)
\;=\;
W_2(\pi,\widehat\pi).
\end{equation}
\end{lemma}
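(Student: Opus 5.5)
I will establish the identity through two matching inequalities, with couplings as the main tool in each direction.

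\emph{Upper bound.} Let $\gamma\in\Pi(\pi,\widehat\pi)$ be an optimal coupling for $W_2(\pi,\widehat\pi)$; an optimal coupling exists because $\Theta$ is a separable Hilbert space, hence Polish. If one prefers to avoid existence, an $\varepsilon$-optimal coupling works equally well. On $\mathcal U\times\mathcal U$ I will take the ``diagonal'' coupling
\[
\Gamma:=\mathrm{Law}\big((X,\vartheta),(X,\widehat\vartheta)\big),\qquad X\sim\rho,\ (\vartheta,\widehat\vartheta)\sim\gamma,\ X\perp(\vartheta,\widehat\vartheta).
\]
Its marginals are exactly $\rho\otimes\pi$ and $\rho\otimes\widehat\pi$, since $X$ is independent of $\vartheta$ and of $\widehat\vartheta$. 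Under the product norm the $\mathcal X$-components cancel, so
\[
\mathbb E_\Gamma\|(X,\vartheta)-(X,\widehat\vartheta)\|_{\mathcal U}^2=\mathbb E_\gamma\|\vartheta-\widehat\vartheta\|_\Theta^2=W_2(\pi,\widehat\pi)^2.
\]
This gives $W_2(\rho\otimes\pi,\rho\otimes\widehat\pi)\le W_2(\pi,\widehat\pi)$.

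\emph{Lower bound.} Let $\Gamma$ be any coupling of $\rho\otimes\pi$ and $\rho\otimes\widehat\pi$, written as the law of $((X,\vartheta),(X',\widehat\vartheta))$. Its projection onto the $\Theta$-coordinates, namely the law of $(\vartheta,\widehat\vartheta)$, is a coupling of $\pi$ and $\widehat\pi$. Dropping the nonnegative $\mathcal X$-term gives
\[
\mathbb E_\Gamma\|(X,\vartheta)-(X',\widehat\vartheta)\|_{\mathcal U}^2=\mathbb E\|X-X'\|_{\mathcal X}^2+\mathbb E\|\vartheta-\widehat\vartheta\|_\Theta^2\ \ge\ W_2(\pi,\widehat\pi)^2.
\]
Taking the infimum over $\Gamma$ yields the reverse inequality.

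\emph{Remaining checks.} Both product measures lie in $\mathcal P_2(\mathcal U)$, because their second moments are sums of the finite second moments of the marginals. This makes both Wasserstein distances finite, and the equality follows.

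I do not expect a real obstacle. The only point needing care is the first step: $\Gamma$ must be built with $X$ independent of the pair $(\vartheta,\widehat\vartheta)$, so that its marginals are the product measures, and one must either justify that an optimal $\gamma$ exists or pass through $\varepsilon$-optimal couplings.
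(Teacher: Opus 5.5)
Your proposal is correct and follows the paper's argument: the upper bound uses the coupling that is diagonal in $\mathcal X$ and optimal in $\Theta$ (the paper's $\gamma_X\otimes\gamma_\Theta$ with $\gamma_X(d\boldsymbol{x},d\boldsymbol{x}')=\rho(d\boldsymbol{x})\,\delta_{\boldsymbol{x}}(d\boldsymbol{x}')$). The lower bound projects an arbitrary coupling onto its $(\boldsymbol{\theta},\boldsymbol{\theta}')$-marginal and drops the nonnegative $\mathcal X$-term; your remarks on existence of an optimal (or $\varepsilon$-optimal) coupling and on finiteness of second moments add rigor the paper leaves implicit.
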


Now we can state our main result:
\begin{theorem}[Finite-chain approximation of the bound objective]
\label{thm:finite_chain}
Let $\pi\in\mathcal P_2(\Theta)$ and define the ideal joint law $\nu:=\rho\otimes\pi\in\mathcal P_2(\mathcal U)$.
Given posterior samples $\{\boldsymbol{\theta}^{(i)}\}_{i=1}^N$, define the empirical posterior
\begin{equation}
\widehat\pi_N:=\frac1N\sum_{i=1}^N\delta_{\boldsymbol{\theta}^{(i)}},
\qquad
\widehat\nu_N:=\rho\otimes\widehat\pi_N.
\end{equation}
Consider the objective with deployment law set to the ideal law, i.e.,\ $J_{\nu}(\cdot)$  with $\nu$ given by \eqref{eq:ideal_dep_law}. 
Then
\begin{equation}\label{eq:finite_chain_bound}
0\ \le\ J_{\nu}(\widehat\nu_N)-J_{\nu}(\nu)
\ \le\ 2\,c(\widehat\nu_N,\nu)\,W_2(\widehat\pi_N,\pi).
\end{equation}
Moreover, assume $\{\boldsymbol{\theta}^{(i)}\}_{i\ge 1}$ is a (Harris) ergodic Markov chain with invariant distribution $\pi$ such that
(i) the ergodic theorem applies to every bounded continuous $\varphi:\Theta\to\mathbb R$, and
(ii) the ergodic theorem applies to the (unbounded) function $\varphi(\boldsymbol{\theta})=\|\boldsymbol{\theta}\|_{\Theta}^{2}$, i.e.
\begin{equation}
\frac{1}{N}\sum_{i=1}^{N}\|\boldsymbol{\theta}^{(i)}\|_{\Theta}^{2}\;\xrightarrow[N\to\infty]{a.s.}\;\int_{\Theta}\|\boldsymbol{\theta}\|_{\Theta}^{2}\,\pi(d\boldsymbol{\theta}).
\label{eq:slln-second-moment}
\end{equation}
Then, almost surely, $W_2(\widehat\pi_N,\pi)\to 0$ and $c(\widehat\nu_N,\nu)\to c(\nu,\nu)$. Consequently,
$J_{\nu}(\widehat\nu_N)\to J_{\nu}(\nu)$ almost surely.
\end{theorem}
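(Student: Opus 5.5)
The first display follows directly from Proposition~\ref{prop:opt} combined with Lemma~\ref{lem:W2_product_reduction}. We take the deployment law to be the ideal law $\nu=\rho\otimes\pi$ and apply Proposition~\ref{prop:opt} with training law $\widehat\nu_N$, which gives
\[
0\le J_\nu(\widehat\nu_N)-J_\nu(\nu)\le 2\,c(\widehat\nu_N,\nu)\,W_2(\widehat\nu_N,\nu).
\]
For this we need $\widehat\nu_N\in\mathcal P_2(\mathcal U)$. This holds because $\rho\in\mathcal P_2(\mathcal X)$ and $\widehat\pi_N$ is a finite atomic measure, so its second moment is finite. Lemma~\ref{lem:W2_product_reduction} then replaces $W_2(\rho\otimes\widehat\pi_N,\rho\otimes\pi)$ by $W_2(\widehat\pi_N,\pi)$, and \eqref{eq:finite_chain_bound} follows.

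For the asymptotic part we use the standard characterization of $W_2$-convergence on a Polish space (Villani, Thm.~6.9, or Santambrogio). On a separable Hilbert space, $W_2(\mu_N,\mu)\to0$ if and only if $\mu_N\to\mu$ weakly and $\int\|\theta\|^2d\mu_N\to\int\|\theta\|^2d\mu$.

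\emph{Weak convergence a.s.} The ergodic theorem gives, for each fixed bounded continuous $\varphi$, $\int\varphi\,d\widehat\pi_N\to\int\varphi\,d\pi$ almost surely. The null set, however, depends on $\varphi$, and we need one null set that works for all $\varphi$ simultaneously. This is the main technical obstacle. To handle it, we use that $\Theta$ is separable, hence Polish. On a Polish space there is a countable family $\{\varphi_k\}$ of bounded (Lipschitz) continuous functions that is convergence determining for weak convergence; for example, take bounded Lipschitz functions built from a countable dense set, as in the proof that the bounded-Lipschitz metric metrizes weak convergence. Intersecting the countably many full-measure events yields one almost sure event on which $\widehat\pi_N\Rightarrow\pi$. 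Assumption \eqref{eq:slln-second-moment} supplies convergence of second moments on a further full-measure event. On the intersection of the two events, $W_2(\widehat\pi_N,\pi)\to0$.

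\emph{Convergence of the prefactor.} Since $\widehat\nu_N=\rho\otimes\widehat\pi_N$, we have
\[
\mathbb E_{\widehat\nu_N}\|u\|^2_{\mathcal U}=\mathbb E_\rho\|x\|^2+\tfrac1N\sum_{i=1}^N\|\theta^{(i)}\|^2 .
\]
By \eqref{eq:slln-second-moment} this converges a.s. to $\mathbb E_\nu\|u\|^2_{\mathcal U}$. The map $(a,b)\mapsto C_1^2\sqrt{2(a+b)}+2C_1C_2$ is continuous, so $c(\widehat\nu_N,\nu)\to c(\nu,\nu)<\infty$ a.s. In particular $c(\widehat\nu_N,\nu)$ is eventually bounded. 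Hence the right-hand side of \eqref{eq:finite_chain_bound} tends to $0$, and by the sandwich $J_\nu(\widehat\nu_N)\to J_\nu(\nu)$ almost surely. Apart from the countable convergence-determining class, every step is a direct citation of earlier results or of the Wasserstein characterization theorem.
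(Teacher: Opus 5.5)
Your proposal is correct and follows the paper's proof essentially step for step. You obtain the finite-$N$ bound from Proposition~\ref{prop:opt} with $\nu_{\mathrm{dep}}=\nu$ and Lemma~\ref{lem:W2_product_reduction}; you use a countable convergence-determining class to get a single null set for weak convergence; you combine assumption (ii) with the characterization of $W_2$-convergence as weak convergence plus convergence of second moments; and you use the second-moment identity for $\widehat\nu_N=\rho\otimes\widehat\pi_N$ to get $c(\widehat\nu_N,\nu)\to c(\nu,\nu)$. Your extra checks, that $\widehat\nu_N\in\mathcal P_2(\mathcal U)$ and the explicit sandwich giving $J_\nu(\widehat\nu_N)\to J_\nu(\nu)$, are details the paper leaves implicit.
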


\begin{remark}
If one keeps a fixed external deployment law $\nu_{\mathrm{dep}}\neq \nu$,
then controlling $J_{\nu_{\mathrm{dep}}}(\widehat\nu_N)-J_{\nu_{\mathrm{dep}}}(\nu)$ requires, in addition to
$W_2(\widehat\nu_N,\nu)$, a control of the penalty mismatch
$W_2(\widehat\nu_N,\nu_{\mathrm{dep}})-W_2(\nu,\nu_{\mathrm{dep}})$ (e.g.\ via the triangle inequality)
and any variability of $c(\cdot,\nu_{\mathrm{dep}})$ with respect to its first argument.
\end{remark}

\newpage
\section{Neural Emulators}
\label{sec:methodology}
The proposed \emph{MCMC Informed Neural Emulator} (MINE) framework enables scalable and task-specific uncertainty quantification without embedding stochasticity directly into the network architecture by decomposing the task of Bayesian uncertainty quantification into two modular steps: \textbf{Step 1} consists of sampling from the posterior distribution of physical parameters using MCMC, and \textbf{Step 2}, training a deterministic neural network emulator using posterior informed training data as described in Section~\ref{sec:MCMCInformedTraining}. 

To explore the posterior predictive distribution given the dataset generated in Step~1, we present two emulators tailored for different use cases: 
\begin{enumerate}
    \item \textbf{A quantile (interval) emulator:} We directly train the neural network $\mathcal{N}_{\boldsymbol{\omega'}}: \mathcal{X}\rightarrow \mathbb{R}^m$ with learnable parameters $\boldsymbol{\omega'}$ to learn the summary statistics of the posterior predictive distribution, specifically the quantile interval.
    \item \textbf{A forward emulator:} We train a surrogate neural network $\mathcal{F}_{\boldsymbol{\omega}}: \mathcal{X}\times\Theta \rightarrow \mathbb{R}^n$ with learnable parameters $\boldsymbol{\omega}$ using posterior samples $\boldsymbol{\theta}$ to emulate the forward operator $F(\boldsymbol{x}\mid\boldsymbol{\theta})$ in the sampling. This enables a fast approximate sampling of the entire posterior predictive distribution. Importantly, the MINE framework does not require a specific architecture here: $\mathcal{F}_{\boldsymbol{\omega}}$ can be implemented by any suitable neural ODE / neural operator surrogate.

\end{enumerate}
In both cases, the neural network is trained in a supervised manner and can be tailored to the specific task. 
In the following subsections, we elaborate on the particular architectures for $\mathcal{F}_{\boldsymbol{\omega}}$ and $\mathcal{N}_{\boldsymbol{\omega'}}$ used in this work. 

\subsection{Quantile Emulator}

Direct computation of the posterior predictive distribution using Monte Carlo simulations involves repeated evaluations of the forward model, which can be prohibitively expensive when applied to large ensembles or in time-sensitive decision-making contexts - even when using a neural emulator for the forward model. While our forward ODE emulator can be used to accelerate the forward model evaluations within the Monte Carlo sampling, the second, complementary, part of the MINE framework, the quantile emulator, can be applied when sampling full paths obtained from the full posterior predictive distribution is not required; instead, the focus lies on quantiles of this distribution as measures of uncertainty. We utilize the neural network $\mathcal{N}_{\boldsymbol{\omega'}}: \mathcal{X}\rightarrow \mathbb{R}^m$ with learnable parameters $\boldsymbol{\omega'}$ to generate posterior interval estimates without the computational burden of traditional sampling methods for each new data instance. Such an approach is particularly advantageous in applications like financial climate risk management, see, e.g., \cite{gobetMetamodellingPathsSimple2025, weichel2024uncertaintyquantificationportfoliotemperature}, where timely estimates are crucial. 

Let us first formalize the problem setup and key concepts: Throughout, we assume a data model in which we have random examples $\boldsymbol{Z} = (\boldsymbol{X}, Y)$ that were created using our forward model $F(\boldsymbol{x}\mid\boldsymbol{\theta})$. It consists of features $\boldsymbol{X} \in \mathcal{X}$ and a response $Y \in \mathcal{Y} \subseteq \mathbb{R}$. The features are chosen with respect to the considered model, and the respective response is then created using the forward model and a random sample of the chain from the MCMC calibration, i.e., $Y= F(\boldsymbol{x}=\boldsymbol{X}\mid\boldsymbol{\tilde{\theta}})$ and $ \boldsymbol{\tilde{\theta}}\sim p(\boldsymbol{\theta}\mid\boldsymbol{y})$. We are given a finite training set $\mathcal{D}=\{\boldsymbol{Z}^{(1)}, \boldsymbol{Z}^{(2)}, \dots, \boldsymbol{Z}^{(n)}\}$ and will make predictions about a new test example $\boldsymbol{Z}^{(n+1)}$. 

We predict $\hat q_{0.05}(\boldsymbol{x})$ and $\hat q_{0.95}(\boldsymbol{x})$ with a ReLU feed‑forward net (two hidden layers, 20 units each) using a pinball-loss objective with a non-crossing regularization term
\begin{equation}
\begin{split}    
\mathcal L =& \sum_{i} \bigl[L_{0.05}(\hat q_{0.05}(\boldsymbol{X}^{(i)}),Y^{(i)})+L_{0.95}(\hat q_{0.95}(\boldsymbol{X}^{(i)}),Y^{(i)})\bigr] 
\\& + \lambda\sum_{i}\max\bigl\{0,\hat q_{0.05}(\boldsymbol{X}^{(i)})-\hat q_{0.95}(\boldsymbol{X}^{(i)})\bigr\},
 \end{split}
 \label{eq:multitarget}
\end{equation}
where $L_{\tau}$ is the so-called pinball loss:
\begin{equation}
L_{\tau}\!\bigl(\hat q_{\tau}(\boldsymbol{x}),\,y\bigr)
=
\begin{cases}
\tau\bigl(y-\hat q_{\tau}(\boldsymbol{x})\bigr),      & \text{if } y\ge \hat q_{\tau}(\boldsymbol{x}),\\[6pt]
(\tau-1)\bigl(y-\hat q_{\tau}(\boldsymbol{x})\bigr),  & \text{if } y<\hat q_{\tau}(\boldsymbol{x}).
\end{cases}
\label{eq:pinball-piecewise}
\end{equation}

\noindent Let $\tau_1,\tau_2,\dots,\tau_m \in (0,1)$ be a set of quantile levels of interest (e.g., 0.05 and 0.95 for a 90\% prediction interval). The corresponding quantile neural network $\mathcal{N}_{\boldsymbol{\omega'}}(\boldsymbol{x})$ is parametrized by weights $\boldsymbol{\omega'}$, which for given features $\boldsymbol{x}$ outputs $m$ values $(\hat q_{\tau_1}(\boldsymbol{x}), \ldots, \hat q_{\tau_m}(\boldsymbol{x}))$, which are estimates of the conditional quantiles: Let $\pi(d\boldsymbol{\theta})$ denote the calibrated parameter posterior, e.g.
$\pi(d\boldsymbol{\theta})=p(\boldsymbol{\theta}\mid \boldsymbol{y}_{\mathrm{hist}},\boldsymbol{x}_{\mathrm{hist}})\,d\boldsymbol{\theta}$.
For a fixed input $\boldsymbol{x}\in\mathcal X$, define the posterior predictive conditional CDF
\begin{equation}
F_{Y\mid \boldsymbol{X}}(y\mid \boldsymbol{x})
:=\mathbb P(Y\le y\mid \boldsymbol{X}=\boldsymbol{x})
=\int_{\Theta}\mathbb P(Y\le y\mid \boldsymbol{X}=\boldsymbol{x},\boldsymbol{\theta})\,\pi(d\boldsymbol{\theta}).
\label{eq:postpred-cdf}
\end{equation}
For $\tau\in(0,1)$, the conditional $\tau$-quantile of $Y$ given $\boldsymbol{X}=\boldsymbol{x}$ is defined by
\begin{equation}
q_{\tau}(\boldsymbol{x})\;:=\;\inf\{y\in\mathbb R:\;F_{Y\mid \boldsymbol{X}}(y\mid \boldsymbol{x})\ge \tau\}.
\label{eq:quantile-def}
\end{equation}
Let us emphasize again that the key idea here is that the training data is generated from the MCMC Informed training distribution obtained in Step 1 of the framework.

\subsection{ODE Forward Emulator: AEODE Neural Network}
\label{subsec:AEODE}

Classical neural ODE approaches learn a parameterized vector field and generate trajectories through sequential time integration. In contrast, the proposed AEODE adopts a neural operator perspective and directly maps model parameters and initial conditions to full solution trajectories on a fixed time grid in a single forward evaluation.

Inspired by recent advances in deep learning for scientific modeling, we propose a deep neural network capable of learning regression mappings from MCMC-sampled data. Unlike existing neural ODE emulators~\cite{neuralode,ode_1,ode_2,ode_3}, our method focuses on parameterized ODEs, aiming to learn the correlations between ODE trajectories and their hidden parameters (e.g., rate coefficients in chemical reactions). Instead of relying on explicit ODE solvers as in~\cite{neuralode}, we design a time-dependent neural network that jointly learns the dependencies between time steps and hidden parameters under fixed initial conditions. Our main novelties are as follows:
\begin{enumerate}
\item Neural operator design: rather than modeling time evolution step by step as in Neural ODEs (e.g., Torchdiffeq~\cite{neuralode} or ChemiODE~\cite{chemiode}), we integrate an attention mechanism~\cite{attn1} to model temporal correlations in the latent domain. Furthermore, different from other neural operator methods~\cite{attn_no_1,attn_no_2} purely utilizing attention mechanisms for operator learning, we propose a learnable time embedding that provides a continuous feature representation that enhances the network’s ability to model ODE dynamics.
\item Physics-informed supervision: we incorporate derivative-based losses on both first- and second-order temporal gradients and total mass conservation to enforce physical consistency during training.
\end{enumerate}

The resulting AutoEncoder-based ODE network (AEODE) can flexibly estimate different trajectory paths corresponding to various ODE parameters $\boldsymbol{\theta}$. Thanks to its end-to-end differentiable architecture optimized for GPUs, AEODE achieves an approximate 10× speedup compared to conventional numerical solvers. The additional physical losses ensure physically consistent and accurate ODE trajectory generation.

\begin{figure*}[pos=b]
	\centering
		\centerline{\includegraphics[width=\textwidth]{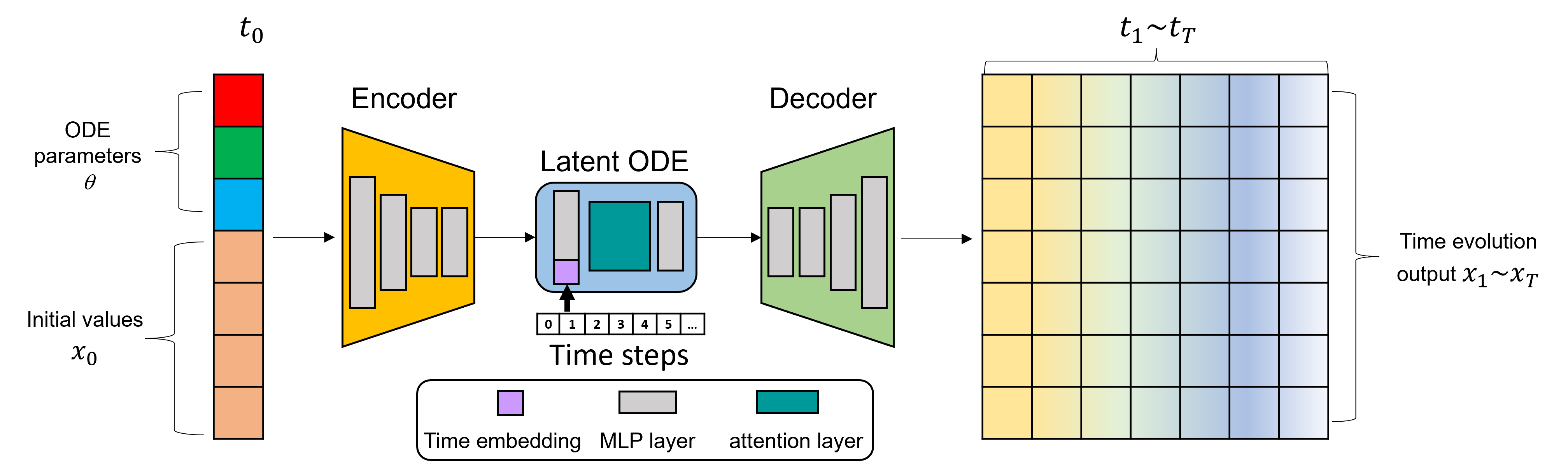}}
		\caption{\small{\textbf{The structure of the proposed \net.} 
		}}
		\label{fig:network}
\end{figure*}

\paragraph{Overall structure}
The overall structure is shown in Figure~\ref{fig:network}. Let us denote the initial ODE value as $\boldsymbol{x}_0\in \mathbb{R}^N$, where $N$ is the number of input elements, and the ODE parameters as $\boldsymbol{\theta}\in\mathbb{R}^d$, where $d$ is the number of model parameters. Mathematically, given the time steps $t_i=0,1,...,T$, the proposed \net learns a neural ODE function that predicts time-dependent output values $\boldsymbol{x}_t\in\mathbb{R}^N$ based on different ODE parameters $\boldsymbol{\theta}$ as,

\begin{equation}
\boldsymbol{x}(t,\boldsymbol{\theta})
=\psi\!\left(\varphi(\boldsymbol{x}_0,\boldsymbol{\theta})+\int_{t_0}^{t} f\!\left(z(s,\boldsymbol{\theta}),\,s;\,\boldsymbol{\theta}\right)\,ds\right),
\qquad t\in[t_0,t_T],
\label{eq:ode}
\end{equation}
where $\varphi(\cdot,\cdot)$ is the encoder producing the latent initial condition, $f$ is the latent vector field, and $\psi$ is the decoder mapping the latent trajectory back to the observed state space. The dynamics of the ODE process are characterized as $\boldsymbol{x}(t,\boldsymbol{\theta})$. Given initial values $\boldsymbol{x}_0$, we have the initial state of the latent space $\boldsymbol{z}_0=\varphi(\boldsymbol{x}_0,\boldsymbol{\theta})$, which serves as the initial condition for the latent ODE process. The encoder and decoder are made of multiple layers of MLP for fully connected computation. In order to improve the long-term predictions. We propose two techniques for the latent ODE operation: 1) time embedding, and 2) attention for more accurate space-time evolution.

\paragraph{Time embedding}
We observe that some physical phenomena oscillate over time, like chemical reactions. This behavior can be injected into the neural network as prior knowledge that predetermines the tendency of time evolution. 

Based on this observation, we propose to utilize frequency time encoding. Mathematically, given the time steps $t_i=0,1,...,T$, we project them onto a higher dimensional space $\mathbb{R}^{2L}$ as follows,

\begin{equation}
\lambda(t)
=\Big(\sin(2\pi \omega_{1} t),\cos(2\pi \omega_{1} t),\ldots,
      \sin(2\pi \omega_{L} t),\cos(2\pi \omega_{L} t)\Big)\in\mathbb R^{2L},
\label{eq:time-embed}
\end{equation}
where $\omega=(\omega_1,\ldots,\omega_L)\in\mathbb R^{L}$ are learnable frequency parameters (initialized, e.g., on a log-spaced grid). We represent the time as a combination of sine and cosine operators so that the network can learn to adjust the reaction frequency.

\paragraph{Attention for nonlocal feature extraction}
Attention~\cite{attn1} is commonly used in natural language~\cite{attn2,attn3} and image processing~\cite{attn4} because of its high efficiency in learning nonlocal data correlations. Similarly, we could use the same technology for ODE modeling. The motivation comes from the aforementioned time encoding, that we can project the time code to a higher dimension. Meanwhile, the latent values form a time-conditioned latent representation by $\boldsymbol{z}_0=\phi(\boldsymbol{x}_0,\boldsymbol{\theta})$ and $\boldsymbol{z}_\lambda(t)=\boldsymbol{z}_0+\lambda(t)$.
In other words, given the value at the initial time $t_0$, we construct a time-aware representation by embedding multiple future time steps, e.g., $t_1.t_2,\dots$, using the aforementioned time embedding. Each of these time steps is mapped to a high-dimensional vector using sine and cosine functions to preserve temporal order and periodicity. We then concatenate the initial value with each time embedding to create a set of 2D latent vectors that encode both the ODE element states and temporal context. Therefore, we 
can implement the time-dependent attention module for nonlocal computation as,

\begin{equation}
\boldsymbol{z}_{\lambda}^{\mathrm{out}}(t)
= \boldsymbol{z}_{\lambda}^{\mathrm{in}}(t)
+\sigma\!\left(\frac{Q_z K_z^\top}{\sqrt d}\right)V_{\boldsymbol{z}},
\qquad
Q_{\boldsymbol{z}}=\boldsymbol{z}_{\lambda}^{\mathrm{in}}(t)W_Q,\;K_{\boldsymbol{z}}=\boldsymbol{z}_{\lambda}^{\mathrm{in}}(t)W_K,\;V_{\boldsymbol{z}}=\boldsymbol{z}_{\lambda}^{\mathrm{in}}(t)W_V.
\label{eq:attn}
\end{equation}

\noindent The $W_*$ is the learnable parameters, $d$ is the dimension of the feature maps. $\sigma$ is the softmax function to normalize the auto-correlation matrix.

\paragraph{Losses for optimization}
To train the whole \net, not only do we utilize the commonly used Mean Squared Errors (MSE) between prediction and ground truth, but we also propose to utilize the first- and second-order derivatives, and total mass conservation loss. Mathematically, we can define the overall losses as

\begin{equation}
Loss=\alpha_1 L_{recon}+\alpha_2 L_{d1}+\alpha_3 L_{d2} +\alpha_4 L_{idn}+\alpha_5 L_{mass}
\label{eq:loss}
\end{equation}

\noindent In equation~(\ref{eq:loss}), $\alpha_1$ to $\alpha_5$ are the weighting parameters that balance all five loss terms. For $L_{recon}$ is the reconstruction loss that measures the discrepancy between the prediction and ground truth as $L_{recon}=\|\boldsymbol{x}(t)-\boldsymbol{x'}(t)\|^2$. The first-order gradient loss enforces the predicted trajectories to closely follow the ground truth over time as $L_{d1}=\|\frac{d\boldsymbol{x}(t)}{dt}-\frac{d\boldsymbol{x'}(t)}{dt}\|^2$. Similarly, we can define the second-order gradient loss as $L_{d2}=\|\frac{d\boldsymbol{x}^2(t)}{dt^2}-\frac{d\boldsymbol{x'}^2(t)}{dt^2}\|^2$. Meanwhile, we can also enforce that the \net should preserve the initial condition unchanged during the training process. Hence we can define the identity loss as $L_{idn}=\|\boldsymbol{x}_0-\psi(\varphi(\boldsymbol{x}_0,\boldsymbol{\theta})\|^2$. Finally, we define the total mass conservation loss, ensuring that the mass of the predicted trajectory aligns with the mass of the ground truth at each time step. We have $L_{mass}=\|\mathcal{M}(\boldsymbol{x}(t))-\mathcal{M}(\boldsymbol{x'}(t))\|^2$, where $\mathcal{M}$ is the summation of all ODE elements.

Let us interpret the design of the proposed AEODE in detail. Note that the latent integral in Eq.~(\ref{eq:ode}) equals the latent state, 
\begin{equation}
\boldsymbol{z}(t,\boldsymbol{\theta})=\varphi(\boldsymbol{x}_0,\boldsymbol{\theta})+\int_{t_0}^{t} f(\boldsymbol{z}(s,\boldsymbol{\theta}),s; \boldsymbol{\theta})\,\mathrm{d}s.
\end{equation}
Although it may not hold exactly here, consider for a moment the latent flow expressed in Duhamel/variation-of-constants form:
\begin{equation}\label{eqn:Duhmael}
\boldsymbol{z}(t,\boldsymbol{\theta})=\Phi_{\boldsymbol{\theta}}(t,t_0)\,\varphi(\boldsymbol{x}_0,\boldsymbol{\theta})
\;+\;\int_{t_0}^{t}\Phi_{\boldsymbol{\theta}}(t,s)\,u_{\boldsymbol{\theta}}(s)\,\mathrm{d}s,
\end{equation}
which defines the latent propagator (operator-valued kernel)
\(\mathcal{K}_{\boldsymbol{\theta}}(t,s):=\Phi_{\boldsymbol{\theta}}(t,s)\). On a discrete time grid the integral term is approximated by the quadrature

$$\int_{t_0}^{t_i}\mathcal{K}_{\boldsymbol{\theta}}(t_i,s)\,u_{\boldsymbol{\theta}}(s)\,\mathrm{d}s
\approx \sum_j \Delta s_j\,\mathcal{K}_{\boldsymbol{\theta}}(t_i,s_j)\,u_{\boldsymbol{\theta}}(s_j).
$$
AEODE's time-aware attention implements this quadrature: by forming queries/keys from the time-conditioned latent \(\big[\varphi(\boldsymbol{x}_0,\boldsymbol{\theta});\lambda(t)\big]\) and values \(V_j\) from the local drivers, the learned attention weights \(w_{ij}\) act as parameter-conditioned quadrature weights that approximate \(\mathcal{K}_{\boldsymbol{\theta}}(t_i,s_j)\) (up to a calibrating readout). Consequently the decoder receives the encoder output \(\varphi(\boldsymbol{x}_0,\boldsymbol{\theta})\) plus an attention-produced integral increment \(\sum_j w_{ij}V_j\), yielding a consistent operator-level interpretation of Eq.~(\ref{eq:ode}) in terms of \(\mathcal{K}_{\boldsymbol{\theta}}\).

\begin{remark}
Equation~\eqref{eqn:Duhmael} should be read with care. The representation is the classical variation-of-constants (Duhamel) formula and holds \emph{exactly} for linear (possibly time-varying) latent dynamics
$\dot{\boldsymbol{z}}(t)=A_{\boldsymbol{\theta}}(t)\,\boldsymbol{z}(t)+\boldsymbol{u}_{\boldsymbol{\theta}}(t)$, where $\Phi_{\boldsymbol{\theta}}(t,s)$ is the state-transition operator generated by $A_{\boldsymbol{\theta}}(\cdot)$.
In our architecture the latent dynamics are in general nonlinear, $\dot{\boldsymbol{z}}(t)=f_{\boldsymbol{\theta}}(\boldsymbol{z}(t),t)$, for which no global linear propagator
$\Phi_{\boldsymbol{\theta}}(t,s)$ exists in general. Accordingly, we interpret~\eqref{eqn:Duhmael} as an \emph{operator-inspired analogy}:
the attention mechanism produces weights $K_{\boldsymbol{\theta}}(t,s)$ that act as a data-driven influence kernel describing how information at time $s$
contributes to the latent state at time $t$. More precisely, along a given trajectory $\boldsymbol{z}(\cdot)$ we may linearize the dynamics,
$$
\delta\dot{\boldsymbol{z}}(t)=J_{\boldsymbol{\theta}}(t)\,\delta \boldsymbol{z}(t)+\delta \boldsymbol{u}(t), \qquad
J_\theta(t):=\nabla_{\boldsymbol{z}} f_{\boldsymbol{\theta}}(\boldsymbol{z}(t),t),
$$
so that the corresponding perturbations satisfy a Duhamel formula with a trajectory-dependent transition operator
$\Phi_{\boldsymbol{\theta}}^{(\boldsymbol{z})}(t,s)$ generated by $J_{\boldsymbol{\theta}}(\cdot)$. In this sense, $K_{\boldsymbol{\theta}}(t,s)$ can be viewed as a learned approximation to
$\Phi_{\boldsymbol{\theta}}^{(z)}(t,s)$ (or to a truncated Volterra-type expansion) \emph{along the data manifold}, rather than as an exact propagator
for the full nonlinear system.
\end{remark}

\section{Experiments}
\label{sec:experiments}

This section presents the experimental setup of the proposed methodology to evaluate the effectiveness of the MINE paradigm. We distinguish two experimental goals: (i) evaluating the end-to-end MINE framework for uncertainty-aware prediction (forward-emulator sampling and quantile/interval prediction), and (ii) motivating our concrete choice of forward-emulator backbone.
While the MINE framework is architecture-agnostic and the forward emulator can in principle be any neural ODE / neural operator surrogate, in this paper we instantiate it with AEODE due to its accuracy and efficiency on the studied dynamical systems. Accordingly, some experiments focus specifically on AEODE design choices and comparisons, while the remaining experiments focus on the end-to-end UQ performance enabled by the MINE pipeline.
We apply the ODE forward emulator to both the FaIR climate model and the chemical kinetics model, and the quantile emulator to the FaIR climate model only, since the posterior predictive distribution of the simple chemical kinetics is very narrow and does not require a neural network for emulation.

\subsection{Evaluation Metrics}

\paragraph{Quantile Emulator}

To assess the performance of the quantile emulator, we first compute the pinball loss, which is also utilized during the model training phase. Additionally, we perform tests using empirical quantile data. Specifically, for given inputs $\boldsymbol{x}$, we calculate the conditional quantile values $q_{\tau_j}(\boldsymbol{x})$ by numerically approximating the posterior predictive distribution, represented by Monte Carlo draws $\{\widehat y^{(m)}\}_{m=1}^{M}$ from the posterior predictive law
(obtained by sampling $E_0\sim p(E_0\mid\eta)$ and $\boldsymbol{\theta}\sim p(\boldsymbol{\theta}\mid \boldsymbol{y}_{\mathrm{hist}},\boldsymbol{x}_{\mathrm{hist}})$ and evaluating the simulator). Next, we evaluate the mean squared error (MSE) between the empirical quantiles and those generated by the emulator. We also compare the average size of the credible intervals for both the empirical data and the predictions made by the neural network. Finally, we determine the mean 90\%-coverage, which is the average proportion of test sample elements in $\{\widehat y^{(m)}\}_{m=1}^{M}$ that fall within the 90\% credible interval predicted by the emulator.

\paragraph{Forward Emulator}

To evaluate the ODE forward emulator, we compare emulator outputs against the corresponding physical-model outputs using mean squared error (MSE), mean absolute error (MAE), root mean squared error (RMSE), and mean bias error (MBE). Furthermore, we report wall-clock runtime to quantify computational efficiency relative to direct numerical simulation and to assess feasibility for large-scale use.

\subsection{Emulating FaIR with the Quantile Emulator}

With the MINE Posterior Interval Predictor, we estimate the measures of uncertainty in the posterior predictive distribution. Specifically for the FaIR model we want to show how to model additional uncertainties in the input factors $\boldsymbol{x}$. This is motivated by the real-world need to account for both the parameter uncertainty and the uncertainty in future emissions. Direct evaluation of the corresponding \emph{posterior predictive distribution} requires high-dimensional integration over both the model parameter space and future emission trajectories. Our neural network emulator is well suited for this approach, since it scales with this high-dimensional integration. 

Parameter uncertainty is reflected in the posterior distribution as described in Equation~\ref{eqn:posterior}. The uncertainties in the input factors $\boldsymbol{x}$ are modeled by treating the base year emissions $E_0$ as a random variable, reflecting the fact that estimates of global emissions are inherently uncertain and may even be significantly biased, for example due to unreported methane leakages. Therefore, we introduce a set of probability distributions parameterized by distribution hyperparameters $\boldsymbol{\eta} \in \mathcal{E}$. The hyperparameter $\boldsymbol{\eta}_1$ serves as a binary flag indicating the choice of distribution: $\boldsymbol{\eta}_1 = 0$ corresponds to a normal distribution, while $\boldsymbol{\eta}_1 = 1$ specifies a lognormal distribution. The lognormal case is used to model situations in which global emissions are systematically underestimated. The hyperparameter $\boldsymbol{\eta}_2$ defines the location parameter of the distribution, representing a horizontal shift along the x-axis. This is the expected value for global emissions $E_{global}$. The hyperparameter $\boldsymbol{\eta}_3$ specifies the distribution's scale and allows the user to express uncertainty in the global emission. For the normal distribution, this corresponds to the standard deviation. In the lognormal case, $\boldsymbol{\eta}_3$ controls the spread of the distribution via the standard deviation of the underlying normal distribution. Figure~\ref{pdfs_eta} illustrates three example distributions corresponding to different hyperparameter vectors $\boldsymbol{\eta}$, while Figure~\ref{fig:eta_paths} shows the emission pathways for three distinct SSP-RCP scenarios.

\begin{figure}[pos=b]
    \centering
    \subfloat[\textbf{Example distributions of $E_0$} with normal distribution and small sd. (blue), normal distribution and large sd. (orange), and lognormal distribution and small sd. (red).]
    {
        \includegraphics[width=0.45\textwidth]{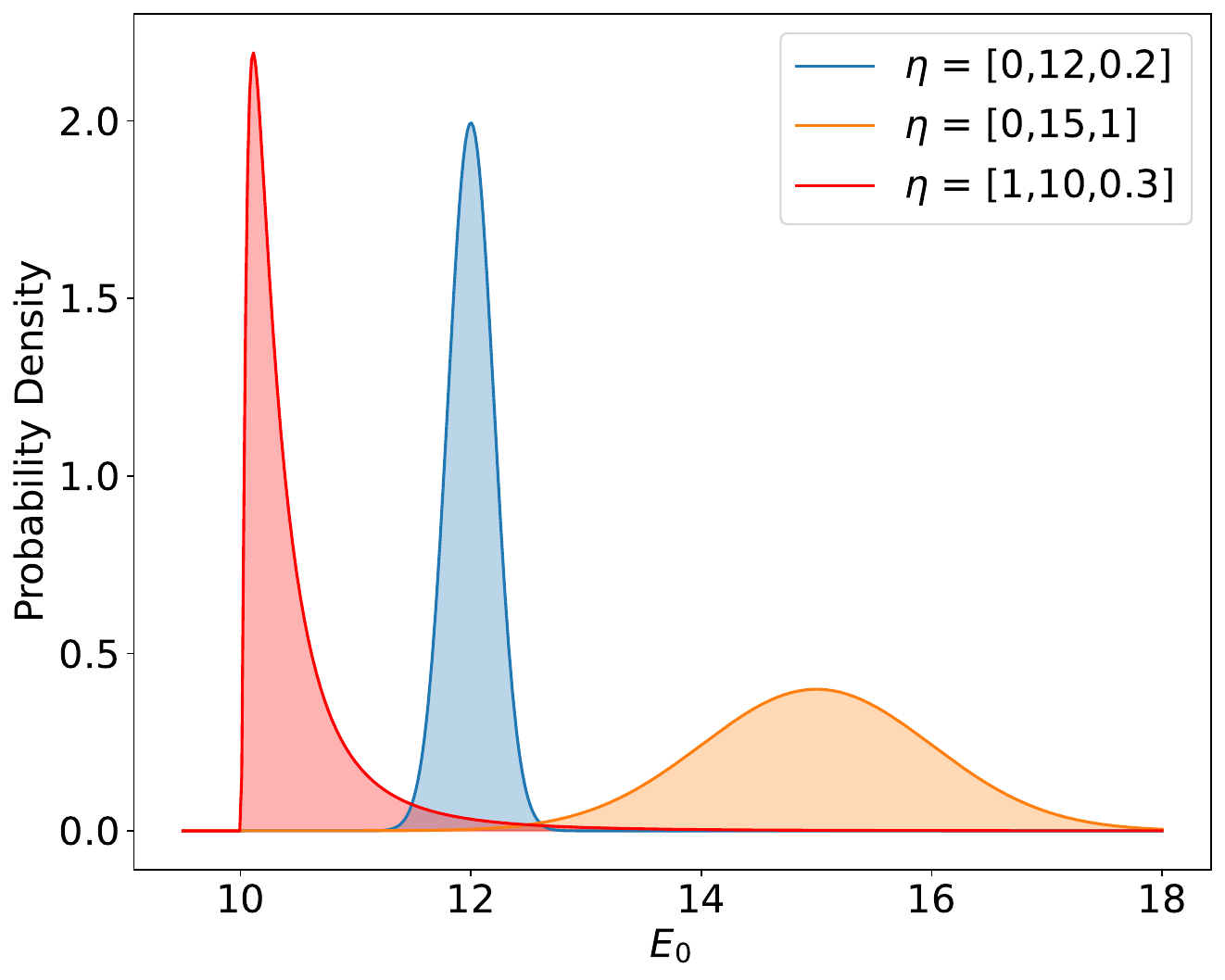}
        \label{pdfs_eta}
    }
    \hfill
    \subfloat[\textbf{90\% credible interval and median of the emission pathways} for normally distributed $E_0$ with loc.\ $15$ and sd.\ $1$ for several SSP-RCP scenarios.]
    {
        \includegraphics[width=0.45\textwidth]{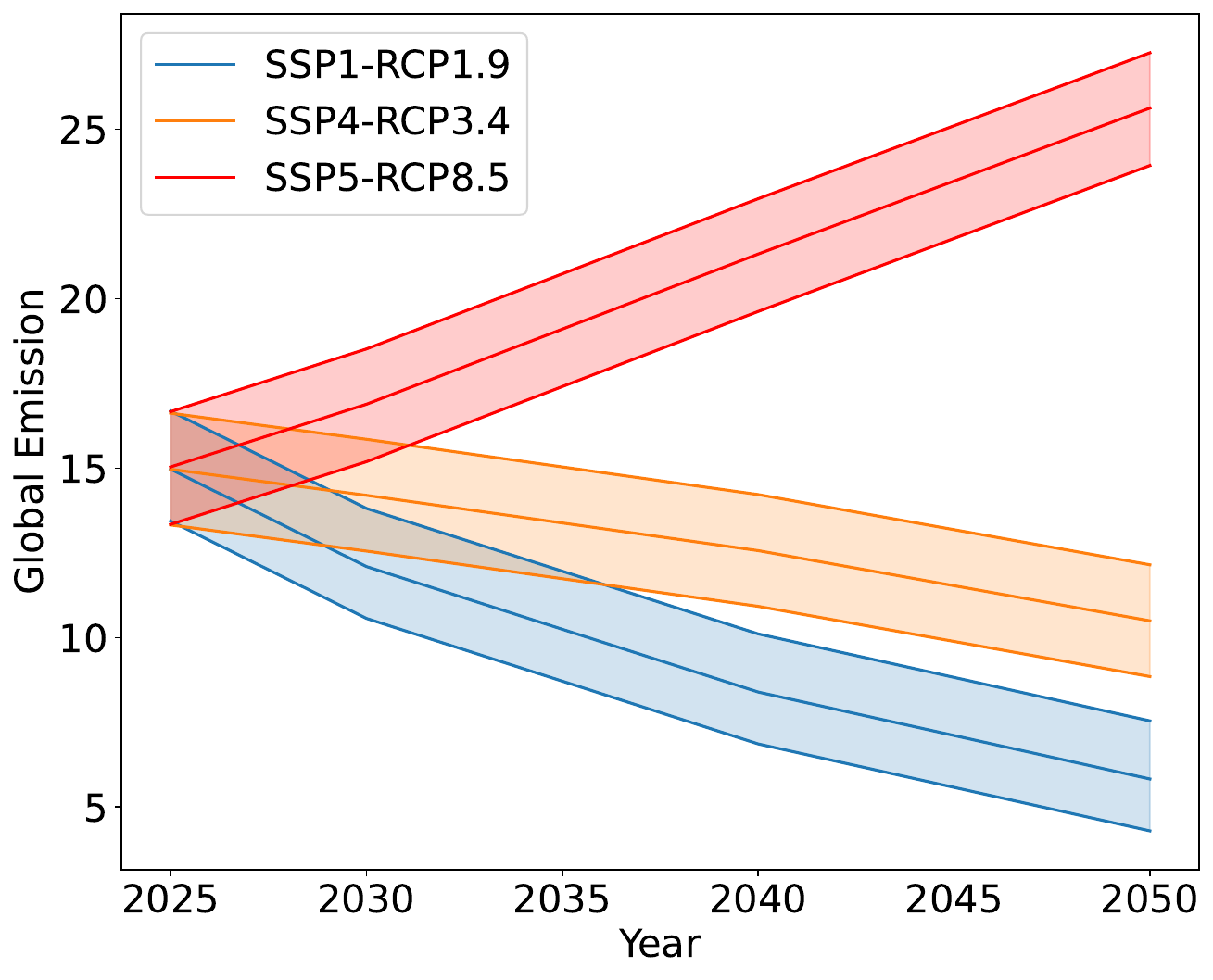}
        \label{fig:eta_paths}
    }
    \caption{\footnotesize Comparison of example $E_0$ distributions and resulting emission pathways.}
    \label{fig:eta_combined}
\end{figure}

Ultimately, the input to the MINE is the vector of hyperparameters of the distribution of base-year emissions, $\boldsymbol{\eta}$, and the socio-economic scenario $s \in S$.  Hence, the distribution of temperature predictions that includes both parameter and emission uncertainty is denoted by $$p(y\mid \boldsymbol{y}_{\text{hist}}, \boldsymbol{x}_{\text{hist}}; \boldsymbol{\eta}, s)=\int\int p(y\mid\boldsymbol{x}=g(s,E_0), \boldsymbol{\theta})\;p(E_0\mid\boldsymbol{\eta})\;p(\boldsymbol{\theta}\mid\boldsymbol{y}_{hist},\boldsymbol{x}_{hist})\;dE_0\;d\boldsymbol{\theta},$$
where $g(s,E_0)$ generates an emission vector based on the scenario $s$ and the base-year emissions $E_0$.  Drawing samples from this predictive posterior via nested Monte Carlo would require sampling from $p(E_0\mid\boldsymbol{\eta})$ and $p(\boldsymbol{\theta}\mid\boldsymbol{y}_{hist}, \boldsymbol{x}_{hist})$ for a substantial number of samples and running FaIR for each sample draw. 

\paragraph{Dataset}
We generate data with the inputs $\boldsymbol{X}\in S\times \mathcal{E}$ by sampling uniformly from $S$, and from $\{0,1\}$ for the distribution type $\boldsymbol{\eta}_{1}$ and within plausible bounds for the location and scale parameters of the initial emission distribution, $\boldsymbol{\eta}_{2}$ and $\boldsymbol{\eta}_{3}$ respectively. We randomly split the data into non-overlapping training, validation, and testing data in the ratio of 6:2:2. The training and validation data are created using a nested Monte Carlo approach and contain single-time projections based on $s$ and the realizations of $E_0$ and $\boldsymbol{\theta}$. That is, we sample $E_0\sim p(E_0\mid\boldsymbol{\eta})$ and $\boldsymbol{\theta}\sim p(\boldsymbol{\theta}\mid\boldsymbol{x}_{hist},\boldsymbol{y}_{hist})$, compute the resulting emission pathway $\boldsymbol{x}$ based on $E_0$ and $s$ and then call the FaIR model to generate the horizon-T prediction $y\sim p(y\mid \boldsymbol{y}_{\text{hist}}, \boldsymbol{x}_{\text{hist}}; \boldsymbol{\eta}, s)$, which corresponds to the global mean temperature in the final year. For the empirical test data, we create a large sample $\widehat{\boldsymbol{y}}^{(1)}, \dots, \widehat{\boldsymbol{y}}^{(M)}$ for each input $\boldsymbol{x}^{(i)}$, $i=1,...,N$ and then calculate the resulting 5\% and 95\% quantiles.

\paragraph{Parameter Setting} 
We again train our network using the Adam optimizer with the learning rate of $1\times10^{-3}$ over 1000 epochs. Since this neural network is very shallow, it can be trained in seconds on an ordinary computer. Only generating train and especially test data is time-consuming; the training dataset had a size of 350,000, and the validation and test datasets had a size of 75,000 each. 

\subsection{Emulating the Chemical Kinetics Model with the ODE Forward Emulator}

The ODE forward emulator for the chemical kinetics has the inputs of the initial concentration of $A,B,C,D$ and $E$ and the output of concentration values for all chemicals at discrete time points $t_i=0,1,\dots,T$.

\paragraph{Datasets}
We collected data based on the chemical data from the chemical system introduced in Section \ref{sec:preliminaries}. We randomly split the data into non-overlapped training, validation and testing datasets. To standardize the data for neural network training, we normalize all data by dividing the maximum value, by approximately 6.45. For the chemical reaction parameters, we take the maximum and minimum values to normalize them to [-1, 1]. To further increase the data variety, we apply data augmentation to the training set. Specifically, we randomly roll the time evolution of the observation as $\hat{y}_i^t = y_i^{t+\tau}, i=1,2,...,100, t=1,2,...,11$, where 

\begin{small}

\begin{equation} 
\hat{y}_i^t =
\left\{
\begin{array}{rcl}
y_i^{t+\tau}       & \text{for} & \tau < 11-t \\[2mm]
y_i^{t+\tau-11}    & \text{for} & \tau > 11-t
\end{array}
\right.
\quad \text{where } i=1,2,\dots,100,\ t=1,2,\dots,11
\label{eq:roll}
\end{equation}
\end{small}

\paragraph{Parameter setting} 
We train \net using Adam optimizer with the learning rate of $1\times10^{-3}$. The batch size is set to 4096 and \net is trained for 50k iterations (about 2 hours) on a PC with one NVIDIA V100 GPU using PyTorch deep learning platform. The weighting factors in the total loss are defined empirically as: $\alpha_1=1, \alpha_2=10, \alpha_3=10, \alpha_4=1, \alpha_5=0.001$.

\subsection{Emulating the FaIR Model with the ODE Forward Emulator}

The ODE forward emulator for the FaIR model has the inputs of $E_0$ and $s$, which are the inputs of the parametrization $g(E_0, s)$ for the input emission $E_{GHG}(t)$ as described in Section~\ref{subsec:fair}, and the model parameters $\boldsymbol{\theta}$. The output of the forward operator is the pathway of annual global mean temperatures $T(t)$.

\paragraph{Dataset.}
We created the training data of size 50,000 by randomly selecting initial emissions $E_0$ from a plausible interval, randomly selecting an SSP-RCP scenario $s$, and then selecting a random parameter $\boldsymbol{\theta}^{(i)}$ from the MCMC chain. Then, we created the temperature prediction $y$ using the FaIR model. Again, we split it with the ratio 6:2:2.

\paragraph{Parameter Setting}
We train AEODE using the same parameter setting as done in the chemical kinetics example. 

\section{Results}
\label{sec:results}

This section provides an explanation of the experiment results. The first part describes the experiments done on the FaIR climate model with the Quantile Emulator, with a focus on validating empirical quantiles. The second part benchmarks the novel \net against two state-of-the-art methods and conducts ablation studies on its key modules, both is done on the chemical kinetics example. To show that \net is also applicable for more complex physical models, we show the results of applying it to the FaIR climate model. 

\subsection{Results for the FaIR Climate Model Quantile Emulator}

\begin{figure}[pos=b]
    \centering
    \includegraphics[width=.8\linewidth]{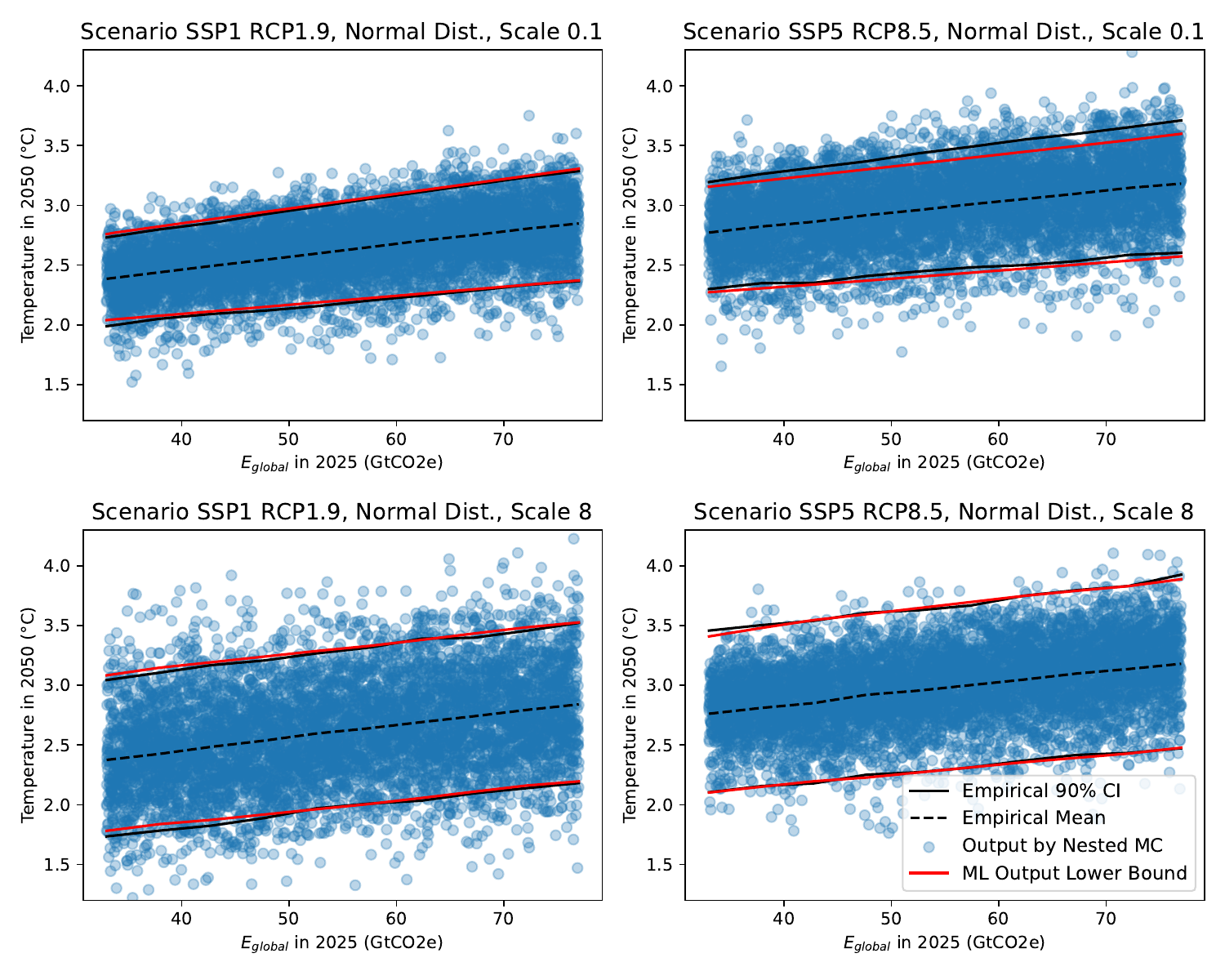}    \caption{\textbf{Visualization of the fit of the empirical and trained 90\% credible interval to the temperature projection.}}
    \label{fig:fair_empirical_trained}
\end{figure}

In Figure \ref{fig:fair_empirical_trained}, one can see the empirical and the trained credible intervals as well as the empirical mean on the sampled temperature distribution for varying values of the expected global emission $E_{\textrm{global}}$ on the x-axis. The two plots on the left side show a positive emission scenario leading to lower temperatures than the right-hand side plots. The two plots on the top show a lower scale and therefore less uncertainty in the emission, leading to smaller credible intervals than the two plots on the bottom. We can see that the empirical results align very well with those produced by the quantile emulator.
The evaluation metrics showed that the MSE for the upper bound of the credible interval was 0.0049, and the lower bound was 0.0023. The pinball loss of test data was 0.0238 and 0.0199, respectively. The mean 90\% coverage, i.e., the average proportion of elements in the set of Monte Carlo predictive draws $\{\widehat y^{(m)}\}_{m=1}^{M}$ that lie within the bounds predicted by the quantile emulator, was 90.04\%. The average interval size of the empirical test data was 1.2907, and the average interval size of the Neural Network was 1.3024, scoring an absolute difference of 0.0117. Comparing the run time of both approaches, we can see the actual benefits of a surrogate model: the feed-forward network quantile emulator has a run time of 0.0006 seconds; the empirical sampling and evaluation has an effective run time of roughly 20 seconds for a sample size of 5,000.

\subsection{Results for the Chemical Kinetics Model \net Forward Emulator}
\subsubsection{Overall comparison with state of the art}
To demonstrate the efficiency of our proposed \net, we compare it with two state-of-the-art methods: ChemiODE~\cite{chemiode}, and Torchdiffeq~\cite{neuralode}. The results are depicted in Table~\ref{tab:test_comparison}. We can see that using our approach achieves the lowest scores in all metrics. Torchdiffeq uses the least running time but also achieves the worst MSE value. On the other hand, our approach uses 15\% extra computation overhead but can significantly reduce the MSE and MBE scores by 5\% and 25\%, respectively. 

\begin{table}[pos=h]
\centering
\begin{tabular}{c|ccccc}
\toprule
\multirow{2}{*}{Modules} & \multicolumn{5}{c}{Evaluation} \\
 & MSE($10^{-5}$) & RMSE ($10^{-3}$) & MAE ($10^{-3}$) & MBE ($10^{-4}$) & Running time ($10^{-5}$ s) \\ \midrule
Torchdiffeq & 3.231 & 5.426 & 3.552 & -7.539 & {\color{red}5.1} \\
ChemiODE & 2.582 & 5.085 & 3.356 & -1.759 & 8.2 \\
Ours & {\color{red}2.450} & {\color{red}4.950} & {\color{red}3.064} & {\color{red}-1.316} & 9.5 \\ \bottomrule
\end{tabular}
\caption{\textbf{Testing data comparison among Torchdiffeq, ChemiODE, and ours.} We report the results on the testing dataset using different methods. For all metrics, the smaller the values, the better the performance we get.}
\label{tab:test_comparison}
\end{table}

\begin{figure}[pos=h]
	\centering
		\centerline{\includegraphics[width=.85\textwidth]{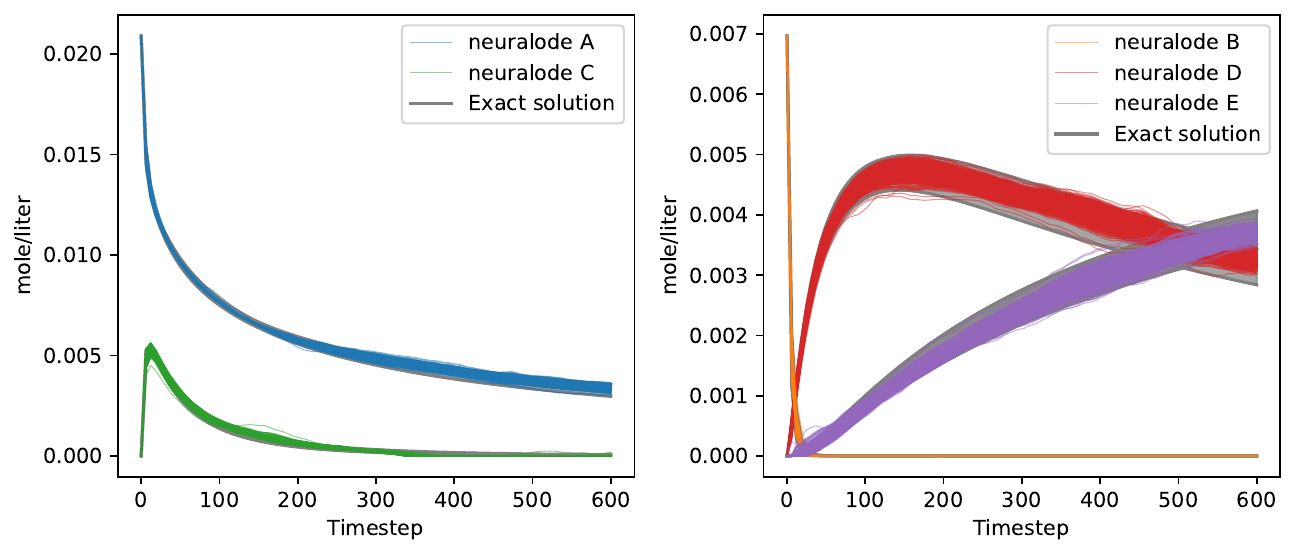}}
		\caption{\small{\textbf{The proposed \net for chemical prediction.} We show all 3000 testing data samples using different chemical reaction parameters in gray lines. The five lines with different colors represent the chemical compounds predicted by the proposed network.
		}}
		\label{fig:ode_compare}
\end{figure}

\begin{figure}[pos=h]
	\centering
		\centerline{\includegraphics[width=.75\textwidth]{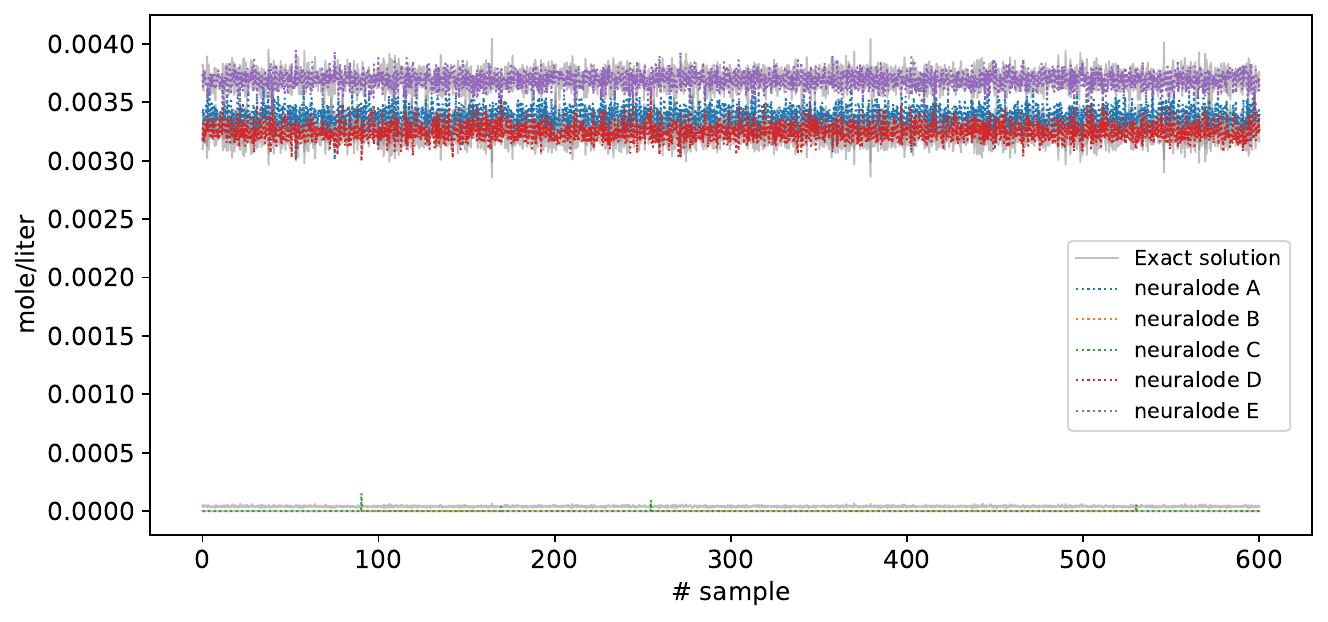}}
		\caption{\small{\textbf{The distribution comparison between the proposed model and the ground truth data.} We show all 3000 testing data samples at the last time step (600) using different chemical reaction parameters in gray lines. The model predictions are shown in different colour, which indicate different chemical compounds.
		}}
		\label{fig:dis_compare}
\end{figure}

Figure~\ref{fig:ode_compare} shows the results of using our proposed \net for chemical ODE prediction. Taken different chemical reaction parameters ${\theta_1, \theta_2, \theta_3}$ with the same initial concentration values, our model is able to estimate the uncertainty of the chemical ODE system for different chemical compounds. For comparison, we show the ground truth chemical values in gray color. We can see that our model can accurately estimate the distribution of chemical changes. For example, at further time steps, the range of concentration distributions becomes wider, and the proposed \net can capture the trend with similar distribution changes. To better visualize the model performance, we slice the chemical concentration values at the last time step and visualize the data distributions. In Figure~\ref{fig:dis_compare}, we can see that ours can estimate the lower and upper bound of the concentration values and align well with the ground truth data.

Figure~\ref{fig:dist_99} shows the distribution differences between ground truth and our model prediction. Similar to Figure~\ref{fig:dis_compare}, we clip the last time step (t=600) across all 3000 testing data samples, and visualize the distribution with the bin size of 256. We can see that the proposed model estimates similar mean and variance as the ground truth. There is a spike for chemical compound C, which our model cannot reproduce. This remains a challenge that would be resolved in the next step.

\subsubsection{Ablation Studies on the Key Modules}
In the Section~\ref{subsec:AEODE}, we propose to use time embedding, self-attention, and physical losses to optimize our network. In order to demonstrate their effects on the overall chemical prediction, we conduct the ablation studies and report the results in Table~\ref{tab:ablation_chem}. NeuralODE is the baseline model for comparison. We add three novel modules one by one to the network and evaluate their performance on the testing dataset. We can see that Time embedding has a significant impact on the performance, approximately $5\times 10^{-6}$ in MSE. Attention improves the MSE by $1\times 10^{-5}$

\begin{table}[pos=h]
\centering

\begin{tabular}{cccc|cccc}
\toprule
\multicolumn{4}{c|}{Modules} & \multicolumn{4}{c}{Evaluation} \\ \midrule
NeuralODE & Time encoding & Attn & Physical loss & MSE($10^{-5}$) & RMSE ($10^{-3}$) & MAE ($10^{-3}$) & MBE ($10^{-4}$) \\ \midrule
\checkmark &  &  &  & 3.101 & 5.569 & 3.565 & -8.190 \\
\checkmark & \checkmark &  &  & 2.593 & 5.092 & 3.209 & {\color{blue}-0.951} \\
\checkmark & \checkmark & \checkmark &  & {\color{blue}2.466} & {\color{blue}4.966} & {\color{blue}3.076} & -1.273 \\
\checkmark & \checkmark & \checkmark & \checkmark & {\color{red}2.450} & {\color{red}4.950} & {\color{red}3.064} & {\color{red}-1.316} \\ \bottomrule
\end{tabular}%

\caption{\textbf{Ablation studies of using proposed modules in the chemical kinetics example.} We report the results on the testing dataset using different key modules and loss terms.}
\label{tab:ablation_chem}
\end{table}

\begin{figure}[pos=t]
	\centering
		\centerline{\includegraphics[width=\textwidth]{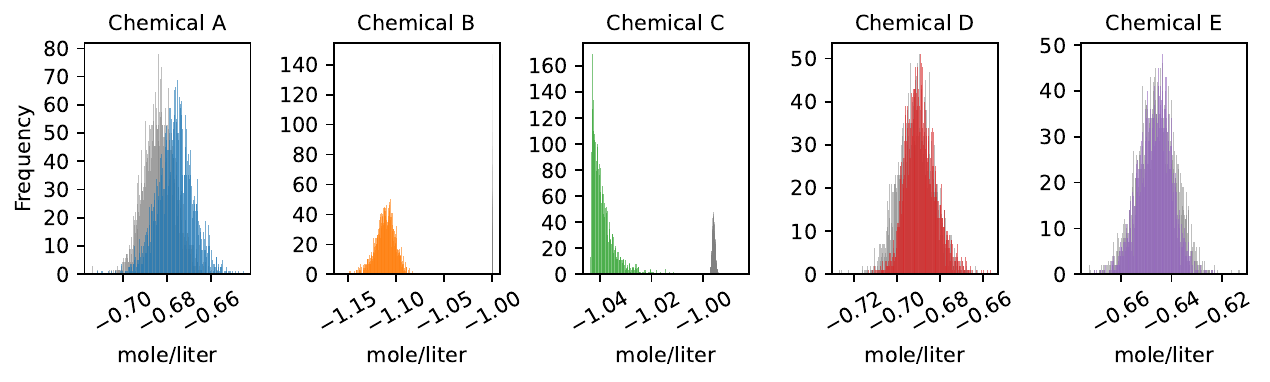}}
		\caption{\small{\textbf{Visualization of the histogram of chemical concentration at time step 600.} We use the histogram to statistically compute the distribution of all 3000 testing data samples at the last time step. The gray curve is the ground truth, and colored lines are model predictions.
		}}
		\label{fig:dist_99}
\end{figure}

\subsection{Results for the FaIR Climate Model \net Forward Emulator}
The proposed \net can also be modified for the FaIR climate model. Different from the chemical kinetics model, the only change is to fit the network to output a 1-D time sequence to estimate the temperature values. 

In Table~\ref{tab:ablation_fair}, we repeat the same ablation comparison done in Table~\ref{tab:ablation_chem}. We compare the models with and without the proposed key modules and evaluate their performance on the FaIR example. We can see that combining time embedding, attention, and physical losses can improve the MSE loss by 21\%. 

To visualize the ODE trajectories of the temperature changes, we show the results in Figure~\ref{fig:forward_Fair_1} and \ref{fig:forward_Fair_2}. As can be seen from Figure~\ref{fig:forward_Fair_1}, the model predictions (dotted lines) align well with the ground truth (straight lines), which indicates that the model can approximate the temperature trend with given conditional input information. In Figure~\ref{fig:forward_Fair_2}, we visualize the overall statistics of all testing samples via mean and variance in every time step. Note that the credible interval is larger than in Figure~\ref{fig:fair_data_fit} uncertainty since this is a future prediction which naturally contains more uncertainty.

\begin{table}[pos=H]
\centering
\begin{tabular}{cccc|cccc}
\toprule
\multicolumn{4}{c|}{Modules} & \multicolumn{4}{c}{Evaluation} \\ \midrule
NeuralODE & Time encoding & Attn & Physical loss & MSE($10^{-5}$) & RMSE ($10^{-3}$) & MAE ($10^{-3}$) & MBE ($10^{-4}$) \\ \midrule
\checkmark &  &  &  & 3.200 & 1.859 & 1.596 & -0.166 \\
\checkmark & \checkmark &  &  & 3.010 & 1.768 & 1.446 & {\color{blue}-0.016} \\
\checkmark & \checkmark & \checkmark &  & {\color{blue}3.002} & {\color{blue}1.751} & {\color{blue}1.423} & -0.015 \\
\checkmark & \checkmark & \checkmark & \checkmark & {\color{red}2.949} & {\color{red}1.718} & {\color{red}1.389} & {\color{red}-0.007} \\ \bottomrule
\end{tabular}

\caption{\textbf{Ablation studies of using proposed modules in the FaIR example.} We report the results on the testing dataset using different key modules and loss terms.}
\label{tab:ablation_fair}
\end{table}

\begin{figure}[pos=H]
    \centering
    \subfloat[\textbf{Ground Truth and Prediction of the ODE forward operator} for FaIR for different scenarios.]
    {
        \includegraphics[width=0.47\textwidth]{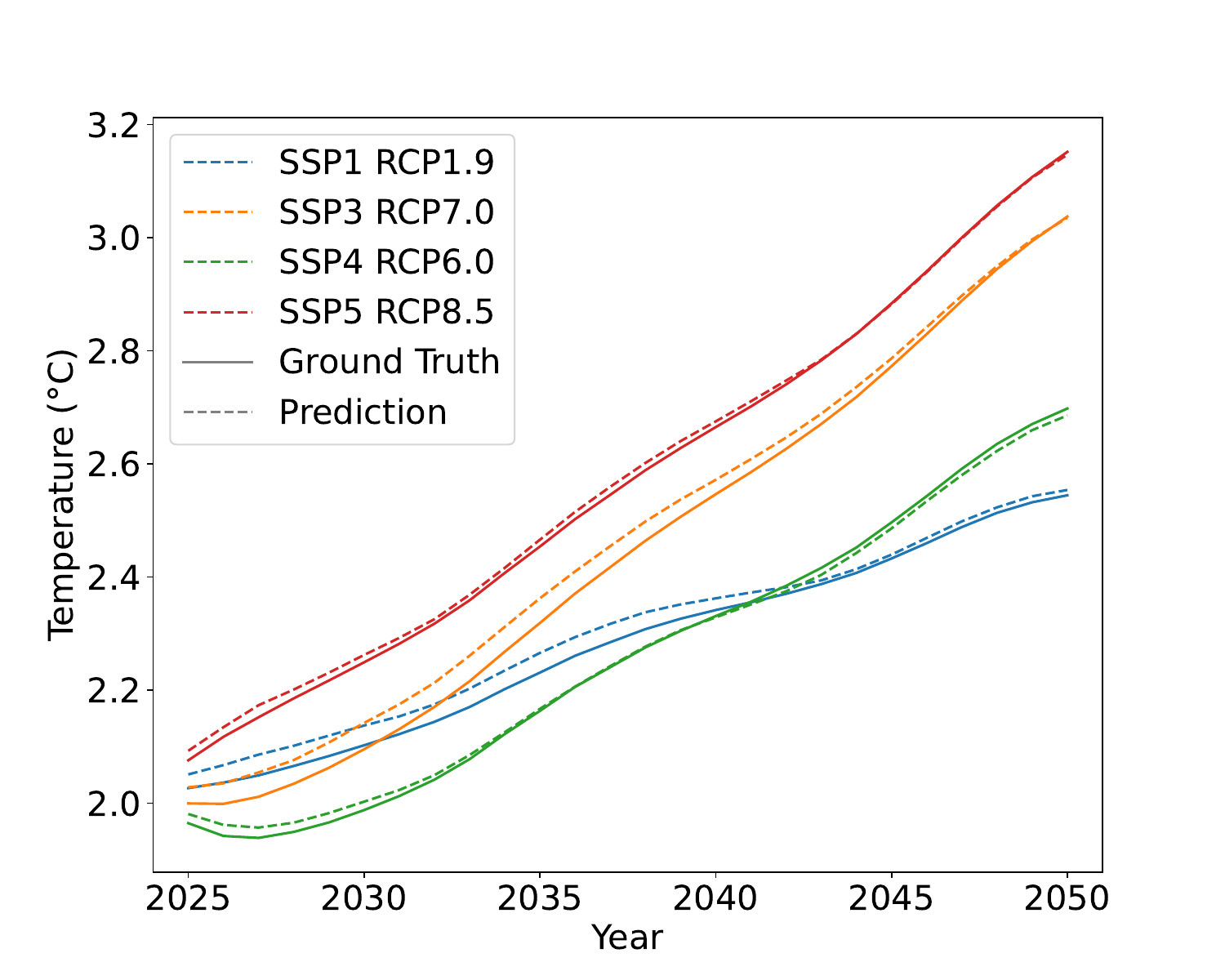}
        \label{fig:forward_Fair_1}
    }
    \hfill
    \subfloat[\textbf{Emulation of 400 FaIR runs by the ODE forward operator with random $\boldsymbol{\theta}$ in each iteration (blue) and 90\% credible interval (red)} for the SSP3 RCP7.0 Scenario.]
    {
        \includegraphics[width=0.47\textwidth]{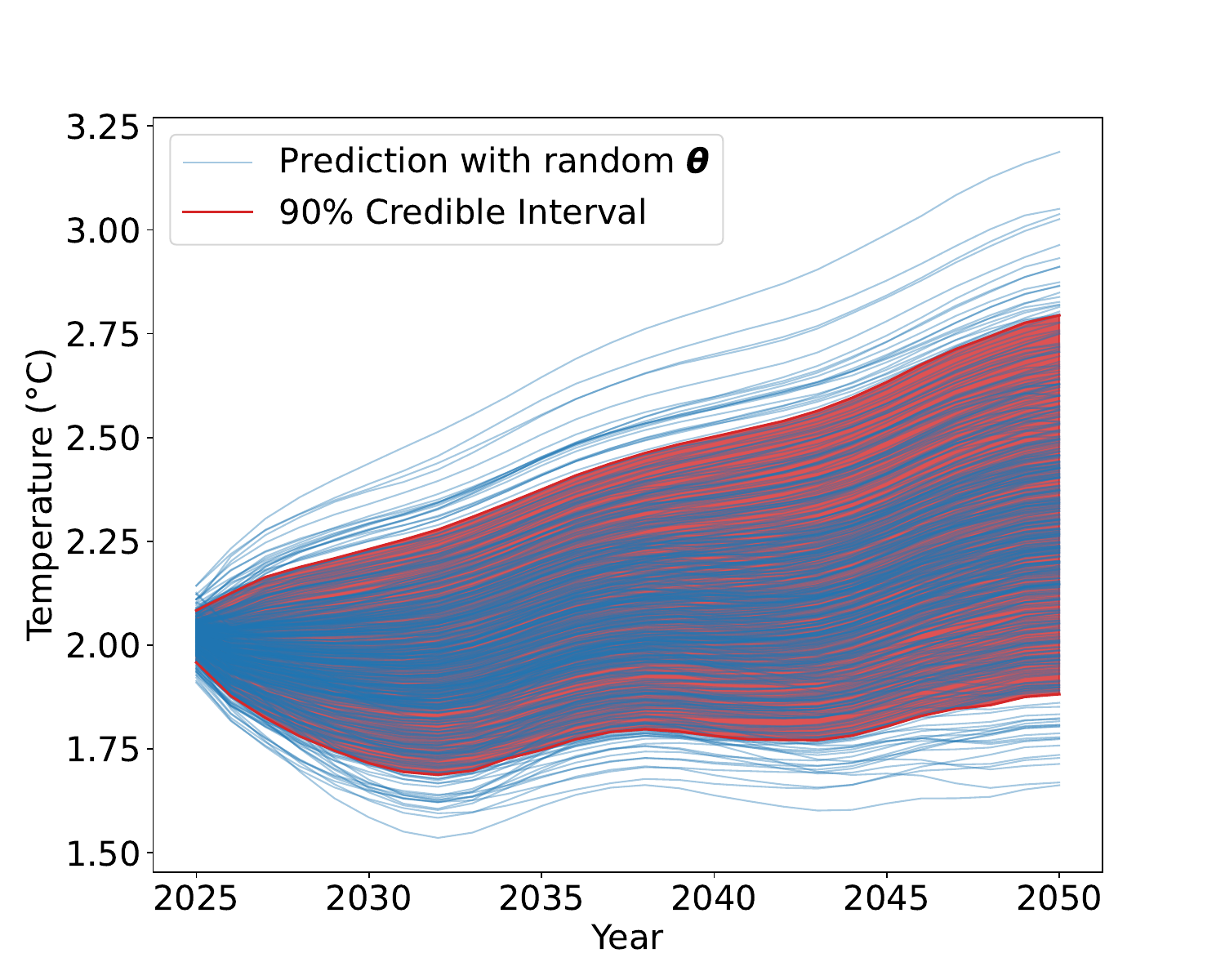}
        \label{fig:forward_Fair_2}
    }
    \caption{Comparison of forward operator simulations.}
    \label{fig:forward_Fair_all}
\end{figure}

\section{Conclusion}
\label{sec:conclusion}

We introduced the \emph{MCMC Informed Neural Emulator} (MINE) paradigm for uncertainty quantification in dynamical systems. The key idea is to decouple Bayesian inference from surrogate modeling: MCMC is used offline to infer a posterior over physical model parameters, and deterministic neural emulators are then trained on posterior-informed input--output pairs. This strategy concentrates training effort on statistically relevant regions of parameter space and avoids expensive simulator evaluations at implausible parameter values. We presented two complementary MINE realizations. First, a quantile (interval) emulator learns posterior predictive quantiles directly, enabling low-latency uncertainty intervals without sampling at inference time. Second, a forward emulator learns the simulator mapping and enables efficient posterior predictive sampling by reusing stored posterior draws of parameters. For the forward emulator, we instantiated MINE using an AutoEncoder-based ODE neural network (AEODE) with time embeddings and attention, and we evaluated performance using both accuracy and calibration-oriented metrics.

Empirically, the proposed approach yields accurate and computationally efficient uncertainty-aware predictions on both a chemical kinetics ODE system and the FaIR simple climate model. In particular, the quantile emulator closely matches empirical credible intervals for FaIR temperature projections, while the AEODE forward emulator achieves favorable accuracy-runtime trade-offs compared with existing neural-ODE baselines and benefits from the proposed architectural components in ablation studies. Several limitations and directions for future work remain. MINE assumes that offline MCMC sampling is feasible; extending the framework to settings with extremely expensive simulators may require multi-fidelity strategies and/or surrogate-assisted sampling. Finally, beyond methodological extensions, we envision MINE serving as a practical uncertainty-aware surrogate for scientific simulators in climate science, atmospheric chemistry, and energy systems, enabling faster hypothesis testing, scenario exploration, and data assimilation in computationally intensive modeling pipelines.

\pagebreak
\appendix
\section{Proofs of the results in Section \ref{sec:MCMCInformedTraining}}
\label{app:mathanalysis}

\begin{proof}[Proof of Lemma \ref{lem:shift}]
Let $\Delta(u):=F(u)-\mathcal{E}(u)\in\mathcal{Y}$ and $h(u):=\|\Delta(u)\|_{\mathcal{Y}}^2$. For arbitrary $u,v\in\mathcal{U}$,
$$
|h(u)-h(v)|
=|\langle\Delta(u)+\Delta(v),\,\Delta(u)-\Delta(v)\rangle_{\mathcal{Y}}|
\le \big(\|\Delta(u)\|_{\mathcal{Y}}+\|\Delta(v)\|_{\mathcal{Y}}\big)\,\|\Delta(u)-\Delta(v)\|_{\mathcal{Y}}.
$$
By Lipschitz continuity of $F$ and $\mathcal{E}$ we have
$$
\|\Delta(u)-\Delta(v)\|_{\mathcal{Y}}
\le \|F(u)-F(v)\|_{\mathcal{Y}}+\|\mathcal{E}(u)-\mathcal{E}(v)\|_{\mathcal{Y}}
\le (L+R)\,\|u-v\|_{\mathcal{U}}=C_1\,\|u-v\|_{\mathcal{U}}.
$$
Moreover,
$$
\|\Delta(w)\|_{\mathcal{Y}}
\le \|F(w)-F(0)\|_{\mathcal{Y}}+\|\mathcal{E}(w)-\mathcal{E}(0)\|_{\mathcal{Y}}+\|F(0)\|_{\mathcal{Y}}+\|\mathcal{E}(0)\|_{\mathcal{Y}}
\le C_1\|w\|_{\mathcal{U}}+C_2.
$$
Hence,
\begin{equation}
\label{eq:increment}
|h(u)-h(v)|\ \le\ \big(C_1(\|u\|_{\mathcal{U}}+\|v\|_{\mathcal{U}})+2C_2\big)\,C_1\,\|u-v\|_{\mathcal{U}}.
\end{equation}
Let $\Gamma(\nu,\nu')$ denote the set of couplings of $\nu$ and $\nu'$. For any $\gamma\in\Gamma(\nu,\nu')$,
$$
\Big|\mathbb{E}_{\nu'}h-\mathbb{E}_{\nu}h\Big|
=\Big|\int_{\,\mathcal{U}\times\mathcal{U}}\big(h(v)-h(u)\big)\,d\gamma(u,v)\Big|
\le \int \big|h(v)-h(u)\big|\,d\gamma(u,v).
$$
Using \eqref{eq:increment}, Cauchy–Schwarz, and the elementary bound $(a+b)^2\le 2(a^2+b^2)$,
\begin{align*}
\int \big|h(v)-h(u)\big|\,d\gamma
&\le C_1^2 \int (\|u\|_{\mathcal{U}}+\|v\|_{\mathcal{U}})\,\|u-v\|_{\mathcal{U}}\,d\gamma 
      \;+\; 2C_1C_2 \int \|u-v\|_{\mathcal{U}}\,d\gamma\\
&\le C_1^2 \bigg(\int (\|u\|_{\mathcal{U}}+\|v\|_{\mathcal{U}})^2 d\gamma\bigg)^{1/2}
               \bigg(\int \|u-v\|_{\mathcal{U}}^2 d\gamma\bigg)^{1/2}\\
     &\phantom{le} \;+\; 2C_1C_2 \bigg(\int \|u-v\|_{\mathcal{U}}^2 d\gamma\bigg)^{1/2}\\
&\le C_1^2 \sqrt{\,2\Big(\mathbb{E}_{\nu}\|u\|_{\mathcal{U}}^2+\mathbb{E}_{\nu'}\|u\|_{\mathcal{U}}^2\Big)}\,
               \bigg(\int \|u-v\|_{\mathcal{U}}^2 d\gamma\bigg)^{1/2}\\
&\phantom{le}
      +\; 2C_1C_2 \bigg(\int \|u-v\|_{\mathcal{U}}^2 d\gamma\bigg)^{1/2}.
\end{align*}
Now take the infimum over $\gamma\in\Gamma(\nu,\nu')$. By the Kantorovich formulation of $W_2$ (see, e.g., Villani~\cite{villani2009optimal}),
$$
\inf_{\gamma\in\Gamma(\nu,\nu')}\bigg(\int \|u-v\|_{\mathcal{U}}^2 d\gamma\bigg)^{1/2}
= W_2(\nu,\nu').
$$
Therefore,
$$
\Big|\mathbb{E}_{\nu'}h-\mathbb{E}_{\nu}h\Big|
\le \bigg( C_1^2 \sqrt{\,2\big(\mathbb{E}_{\nu}\|u\|_{\mathcal{U}}^2+\mathbb{E}_{\nu'}\|u\|_{\mathcal{U}}^2\big)} + 2C_1C_2 \bigg)\, W_2(\nu,\nu')
= c(\nu,\nu')\,W_2(\nu,\nu').
$$
Recalling $h(u)=\|F(u)-\mathcal{E}(u)\|_{\mathcal{Y}}^2$ yields the claim:
$$
\mathcal{R}_{\nu'}(\mathcal{E})-\mathcal{R}_{\nu}(\mathcal{E})
=\mathbb{E}_{\nu'}h-\mathbb{E}_{\nu}h
\le c(\nu,\nu')\,W_2(\nu,\nu').
$$
\end{proof}
\begin{remark}
(i) The proof of Lemma \ref{lem:shift} is a standard “coupling + Cauchy–Schwarz” argument; see also Santambrogio~\cite{santabrogio2015optimal} for the $W_1$–version with Lipschitz test functions and Bolley–Villani~\cite{bolley2005weighted} for related inequalities with linear growth. (ii) The constants enter only through Lipschitz moduli and the second moments of $(\nu,\nu')$, hence the restriction to $\mathcal{P}_2(\mathcal{U})$.
\end{remark}

\begin{proof}[Proof of Proposition \ref{prop:opt}]
Apply Lemma~\ref{lem:shift} with $(\nu,\nu')=(\nu,\nu_{\mathrm{dep}})$: For any $\mathcal{E}\in\mathcal{H}$,
\(
\mathcal{R}_{\nu_{\mathrm{dep}}}(\mathcal{E})
\le \mathcal{R}_{\nu}(\mathcal{E}) + c(\nu,\nu_{\mathrm{dep}}) W_2(\nu,\nu_{\mathrm{dep}}).
\)
Taking $\inf_{\mathcal{E}}$ on the right and adding the same penalty gives
\(
J(\nu)\ge \inf_{\mathcal{E}}\mathcal{R}_{\nu_{\mathrm{dep}}}(\mathcal{E}) = J(\nu_{\mathrm{dep}}),
\)
with equality at $\nu=\nu_{\mathrm{dep}}$ since $W_2(\nu_{\mathrm{dep}},\nu_{\mathrm{dep}})=0$.
For the quantitative upper bound, let $\mathcal{E}^\dagger\in\arg\inf_{\mathcal{E}}\mathcal{R}_{\nu_{\mathrm{dep}}}(\mathcal{E})$ and use Lemma~\ref{lem:shift} in both directions to obtain
\begin{align*}
J(\nu)\le \mathcal{R}_{\nu}(\mathcal{E}^\dagger)+c(\nu,\nu_{\mathrm{dep}})W_2(\nu,\nu_{\mathrm{dep}})
&\le& \mathcal{R}_{\nu_{\mathrm{dep}}}(\mathcal{E}^\dagger)+2\,c(\nu,\nu_{\mathrm{dep}})W_2(\nu,\nu_{\mathrm{dep}})\\
&=& J(\nu_{\mathrm{dep}})+2\,c(\nu,\nu_{\mathrm{dep}})W_2(\nu,\nu_{\mathrm{dep}}).
\end{align*}
\end{proof}

\begin{proof}[Proof of Lemma \ref{lem:W2_product_reduction}]
Let $\gamma_\Theta\in\Gamma(\pi,\widehat\pi)$ be optimal for $W_2(\pi,\widehat\pi)$ and let
$\gamma_X$ be the diagonal coupling of $\rho$ with itself, i.e.,
$\gamma_X(d\boldsymbol{x},d\boldsymbol{x}')=\rho(d\boldsymbol{x})\,\delta_{\boldsymbol{x}}(d\boldsymbol{x}')$.
Then $\gamma:=\gamma_X\otimes\gamma_\Theta$ lies in
$\Gamma(\rho\otimes\pi,\rho\otimes\widehat\pi)$ and
$$\int_{\mathcal U\times\mathcal U}\|(\boldsymbol{x},\boldsymbol{\theta})-(\boldsymbol{x}',\boldsymbol{\theta}')\|_{\mathcal U}^2\,d\gamma
=
\int_{\Theta\times\Theta}\|\boldsymbol{\theta}-\boldsymbol{\theta}'\|_\Theta^2\,d\gamma_\Theta,
$$
so $W_2(\rho\otimes\pi,\rho\otimes\widehat\pi)\le W_2(\pi,\widehat\pi)$.

Conversely, take any $\gamma\in\Gamma(\rho\otimes\pi,\rho\otimes\widehat\pi)$ and let $\gamma_\Theta$
be its $(\boldsymbol{\theta},\boldsymbol{\theta}')$-marginal. Then $\gamma_\Theta\in\Gamma(\pi,\widehat\pi)$ and, since the $x$-term in the
product cost is nonnegative,
$$\int_{\mathcal U\times\mathcal U}\|(\boldsymbol{x},\boldsymbol{\theta})-(\boldsymbol{x}',\boldsymbol{\theta}')\|_{\mathcal U}^2\,d\gamma
\ge
\int_{\Theta\times\Theta}\|\boldsymbol{\theta}-\boldsymbol{\theta}'\|_\Theta^2\,d\gamma_\Theta
\ge
W_2(\pi,\widehat\pi)^2.
$$
Taking the infimum over $\gamma$ gives the reverse inequality, hence equality.
\end{proof}

\begin{proof}[Proof of Theorem \ref{thm:finite_chain}]
\emph{Step 1: Finite $N$ bound and the $W_2$ reduction.}
Apply Proposition~\ref{prop:opt} with deployment law $\nu_{\mathrm{dep}}:=\nu$ and training law $\widehat\nu_N$:
$$
0\le J_{\nu}(\widehat\nu_N)-J_{\nu}(\nu)
\le 2\,c(\widehat\nu_N,\nu)\,W_2(\widehat\nu_N,\nu).
$$
Lemma~\ref{lem:W2_product_reduction} yields $W_2(\widehat\nu_N,\nu)=W_2(\widehat\pi_N,\pi)$, proving
\eqref{eq:finite_chain_bound}.

\emph{Step 2: $W_2(\widehat\pi_N,\pi)\to0$.}
Since $\Theta$ is a separable Hilbert space, it is Polish. Let $\mathcal G\subset C_b(\Theta)$ be a
\emph{countable} convergence-determining class (e.g.\ a countable dense subset of bounded Lipschitz test
functions). By the assumed ergodic theorem, for each $g\in\mathcal G$,
$$
\int g\,d\widehat\pi_N
=\frac1N\sum_{i=1}^N g(\boldsymbol{\theta}^{(i)})
\longrightarrow
\int g\,d\pi
\quad\text{a.s.}
$$
By countability, these convergences hold simultaneously a.s.\ for all $g\in\mathcal G$, which implies
$\widehat\pi_N\Rightarrow \pi$ weakly. 
By assumption (ii), the empirical second moment converges almost surely:
$$
\int \|\boldsymbol{\theta}\|_\Theta^2\,d\widehat\pi_N(\boldsymbol{\theta})
=\frac1N\sum_{i=1}^N \|\boldsymbol{\theta}^{(i)}\|_\Theta^2
\longrightarrow
\int \|\boldsymbol{\theta}\|_\Theta^2\,d\pi(\boldsymbol{\theta})
\quad\text{a.s.}
$$
On Polish spaces, weak convergence plus convergence of second moments is equivalent to convergence in
$W_2$ (see, e.g., Villani~\cite{villani2009optimal}).

\emph{Step 3: Boundedness and convergence of $c(\widehat\nu_N,\nu)$.}
By construction,
$$
\mathbb E_{\widehat\nu_N}\|\boldsymbol{u}\|_{\mathcal U}^2
=
\mathbb E_{\rho}\|\boldsymbol{x}\|_{\mathcal X}^2
+\int \|\boldsymbol{\theta}\|_\Theta^2\,d\widehat\pi_N(\boldsymbol{\theta}),
$$
so the second-moment convergence above implies
$\mathbb E_{\widehat\nu_N}\|\boldsymbol{u}\|_{\mathcal U}^2\to \mathbb E_{\nu}\|\boldsymbol{u}\|_{\mathcal U}^2$ a.s.
Therefore $c(\widehat\nu_N,\nu)\to c(\nu,\nu)$ and is eventually bounded almost surely.
\end{proof}

\pagebreak
\section{MCMC Chains of the FaIR Parameter Calibration}
\label{app:chains_fair}

Figure \ref{fig:fair_chains} shows the chains of the MCMC simulations for the FaIR model. It was created for 300,000 samples. The burn in phase is 50,000 samples which are not used in the posterior sampling.

\begin{figure}[pos=h]
    \centering
    \includegraphics[width=.95\linewidth]{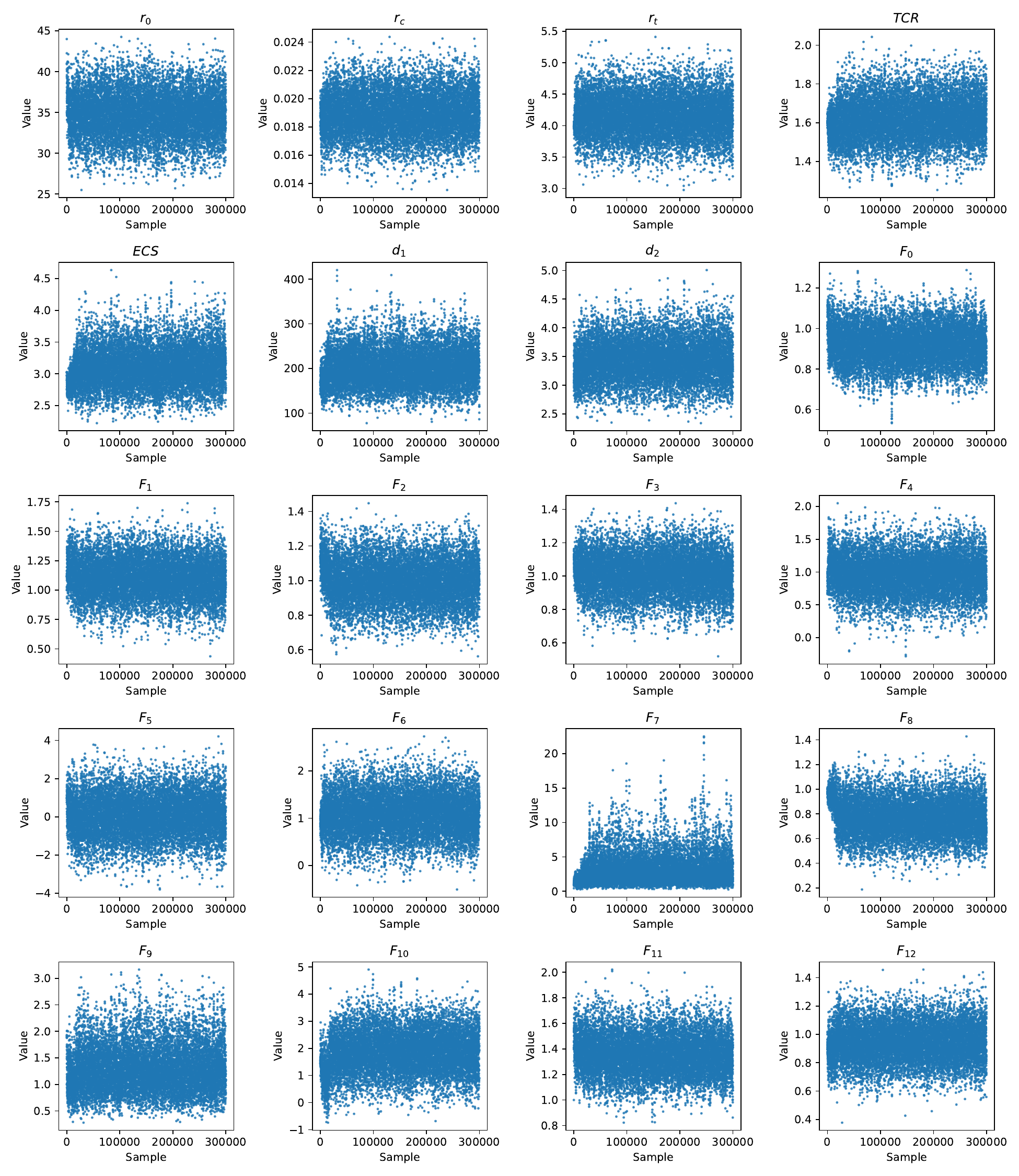}
    \caption{Chains of the MCMC experiments for the FaIR climate model.}
    \label{fig:fair_chains}
\end{figure}

\section*{Declaration on the Use of Generative AI in Scientific Writing}

The authors used ChatGPT to assist with language editing. All resulting content was critically reviewed, edited, and approved by the authors. The authors remain fully responsible for the accuracy, originality, and overall integrity of the manuscript.


\bibliographystyle{plain}

\bibliography{main}

@article{neuralode,
  title={Neural Ordinary Differential Equations},
  author={Chen, R. T. Q. and Rubanova, Y. and Bettencourt, J. and Duvenaud, D.},
  journal={Advances in Neural Information Processing Systems},
  year={2018}
}

@article{neuralode2,
  title={Stabilized neural ordinary differential equations for long-time forecasting of dynamical systems},
  author={Linot, A. J. and Burby, J. W. and Tang, Q. and Balaprakash, P. and Graham, M. D and Maulik, R.},
  journal={Journal of Computational Physics},
  volume={474},
  pages={111838},
  year={2023},
  publisher={Elsevier}
}

@inproceedings{attn1,
 author = {Vaswani, A. and Shazeer, N. and Parmar, N. and Uszkoreit, J. and Jones, L. and Gomez, A. N. and Kaiser, \L. and Polosukhin, I.},
 booktitle = {Advances in Neural Information Processing Systems},
 title = {Attention is All you Need},
 volume = {30},
 year = {2017}
}

@article{attn2,
  title={FunnyNet-W: Multimodal Learning of Funny Moments in Videos in the Wild},
  author={Liu, ZS. and Courant, R. and Kalogeiton, V.},
  journal={International Journal of Computer Vision},
  doi={https://doi.org/10.1007/s11263-024-02000-2},
  year={2024}
}

@article{attn3,
  title={Language models are unsupervised multitask learners},
  author={Radford, Alec and Wu, Jeffrey and Child, Rewon and Luan, David and Amodei, Dario and Sutskever, Ilya and others},
  journal={OpenAI blog},
  volume={1},
  number={8},
  pages={9},
  year={2019}
}

@INPROCEEDINGS{attn4,
  author={Rombach, R. and Blattmann, A. and Lorenz, D. and Esser, P. and Ommer, B.},
  booktitle={2022 IEEE/CVF Conference on Computer Vision and Pattern Recognition (CVPR)}, 
  title={High-Resolution Image Synthesis with Latent Diffusion Models}, 
  year={2022},
  volume={},
  number={},
  pages={10674-10685},
}

@INPROCEEDINGS{chemiode,
  author={Sulzer, I. and Buck, T.},
  booktitle={Machine Learning and the Physical Sciences Workshop, NeurIPS 2023}, 
  title={Speeding up astrochemical reaction networks with
autoencoders and neural ODEs}, 
  year={2023},
  month={12},
  volume={52},
  number={},
}

@article{ode_1,
  title={How to Train Your Neural ODE: the World of Jacobian and Kinetic Regularization},
  author={Finlay, C. and Jacobsen, J. and Nurbekyan, L. and Oberman, A.},
  journal={International Conference on Machine Learning},
  year={2020}
}

@article{ode_2,
  title={Dissecting Neural ODEs},
  author={Massaroli, S. and Poli, M. and Park, J. and Yamashita, A. and Asama, H.},
  journal={Advances in neural information processing systems},
  year={2020}
}

@article{ode_3,
  title={Latent ordinary differential equations for irregularly-sampled time series},
  author={Rubanova, Y. and Chen, R. T. Q. and Duvenaud, D. K.},
  journal={Advances in neural information processing systems},
  volume={32},
  year={2019}
}

@article{ode_4,
  title={ODE2VAE: Deep generative second order ODEs with Bayesian neural networks},
  author={Yildiz, C. and Heinonen, M. and Lahdesmaki, H.},
  journal={Advances in neural information processing systems},
  year={2019}
}

@InProceedings{ode_5,
  title = 	 {Hybrid$^2$ Neural {ODE} Causal Modeling and an Application to Glycemic Response},
  author =       {Zou, B. and Levine, E. and Zaharieva, P. and Johari, R. and Fox, E.},
  booktitle = 	 {Proceedings of the 41st International Conference on Machine Learning},
  pages = 	 {62934--62963},
  year = 	 {2024},
  volume = 	 {235},
  series = 	 {Proceedings of Machine Learning Research},
  month = 	 {21--27 Jul},
  publisher =    {PMLR},
}

@InProceedings{ode_6,
  title = 	 {From Fourier to Neural ODEs: Flow Matching for Modeling Complex Systems},
  author =       {Li, X. and Zhang, J. and Zhu, Q. and Zhao, C. and Zhang, X. and Duan, X. and Lin, W.},
  booktitle = 	 {Proceedings of the 41st International Conference on Machine Learning},
  pages = 	 {29390--29405},
  year = 	 {2024},
  volume = 	 {235},
  series = 	 {Proceedings of Machine Learning Research},
  month = 	 {21--27 Jul},
}

@article{chemode_2,
title = {Neural network emulator for atmospheric chemical ODE},
journal = {Neural Networks},
volume = {184},
pages = {107106},
year = {2025},
issn = {0893-6080},
doi = {https://doi.org/10.1016/j.neunet.2024.107106},
author = {Liu, ZS. and Clusius, P. and Boy, M.},
}

@article{stiff-pinn,
  title={Stiff-PINN: Physics-Informed Neural Network for Stiff Chemical Kinetics},
  author={Ji, W. and Qiu, W. and Shi, Z. and Pan, S. and Deng, S.},
  journal={The Journal of Physical Chemistry A},
  volume = {125},
  number = {36},
  year={2021}
}

@article{mpinn,
title = {A practical PINN framework for multi-scale problems with multi-magnitude loss terms},
journal = {Journal of Computational Physics},
volume = {510},
pages = {113112},
year = {2024},
issn = {0021-9991},
author = {Wang, Y. and Yao, Y. and Guo, J. and Gao, Z.},
}

@article{air_1,
author = {Hou, L. and Dai, Q. and Song, C. and Liu, B. and Guo, F. and Dai, T. and Li, L. and Liu, B. and Bi, X. and Zhang, Y. and Feng, Y.},
title = {Revealing Drivers of Haze Pollution by Explainable Machine Learning},
journal = {Environmental Science \& Technology Letters},
volume = {9},
number = {2},
pages = {112-119},
year = {2022},
doi = {10.1021/acs.estlett.1c00865},
}

@article{air_2,
author = {Betancourt, C. and Li, C. W. Y. and Kleinert, F. and Schultz, M. G.},
title = {Graph Machine Learning for Improved Imputation of Missing Tropospheric Ozone Data},
journal = {Environmental Science \& Technology},
volume = {57},
number = {46},
pages = {18246-18258},
year = {2023},
doi = {10.1021/acs.est.3c05104},
    note ={PMID: 37661931},
}

@article{air_3,
author = {Zhu, Q. and Laughner, J. L. and Cohen, R. C.},
title = {Combining Machine Learning and Satellite Observations to Predict Spatial and Temporal Variation of near Surface OH in North American Cities},
journal = {Environmental Science \& Technology},
volume = {56},
number = {11},
pages = {7362-7371},
year = {2022},
doi = {10.1021/acs.est.1c05636},
    note ={PMID: 35302754},
}

@article{air_4,
author = {Di, Q. and Amini, H. and Shi, L. and Kloog, I. and Silvern, R. and Kelly, J. and Sabath, M. B. and Choirat, C. and Koutrakis, P. and Lyapustin, A. and Wang, Y. and Mickley, L. J. and Schwartz, J.},
title = {Assessing NO2 Concentration and Model Uncertainty with High Spatiotemporal Resolution across the Contiguous United States Using Ensemble Model Averaging},
journal = {Environmental Science \& Technology},
volume = {54},
number = {3},
pages = {1372-1384},
year = {2020},
doi = {10.1021/acs.est.9b03358},
    note ={PMID: 31851499},
}

@article{physics_1,
title = {Physics-informed PointNet: A deep learning solver for steady-state incompressible flows and thermal fields on multiple sets of irregular geometries},
journal = {Journal of Computational Physics},
volume = {468},
pages = {111510},
year = {2022},
issn = {0021-9991},
author = {Kashefi, A. and Mukerji, T.},
}

@article{physics_2,
title = {Multiscale corrections by continuous super-resolution},
journal = {Neural Networks},
volume = {197},
pages = {108516},
year = {2026},
issn = {0893-6080},
doi = {https://doi.org/10.1016/j.neunet.2025.108516},
author = {Liu, ZS. and Maier, R. and Rupp, A.},
}

@misc{physics_3,
      title={Super-Resolution works for coastal simulations}, 
      author={Liu, ZS. and Buttner, M. and Aizinger, V. and Rupp, A.},
      year={2024},
      eprint={2408.16553},
      archivePrefix={arXiv},
      primaryClass={eess.IV},
      url={https://arxiv.org/abs/2408.16553}, 
}

@inproceedings{attn_no_1,
author = {Hao, Z. and Wang, Z. and Su, H. and Ying, C. and Dong, Y. and Liu, S. and Cheng, Z. and Song, J. and Zhu, J.},
title = {GNOT: a general neural operator transformer for operator learning},
year = {2023},
booktitle = {Proceedings of the 40th International Conference on Machine Learning},
articleno = {509},
numpages = {14},
location = {Honolulu, Hawaii, USA},
series = {ICML'23}
}

@article{attn_no_2, 
title={Multiscale Attention Wavelet Neural Operator for Capturing Steep Trajectories in Biochemical Systems}, 
volume={38}, 
number={13}, 
journal={Proceedings of the AAAI Conference on Artificial Intelligence}, 
author={Su, J. and Ma, J. and Tong, S. and Xu, E. and Chen, M.}, 
year={2024}, 
month={Mar.}, 
pages={15100-15107} 
}

@article{finance_1,
title = {Solving high-dimensional partial differential equations using deep learning},
journal = {Proc. Natl. Acad. Sci.},
volume = {115},
pages = {8505-8510},
year = {2018},
no = {34},
author = {Han, J. and Jentzen, A. and E, W.},
}

@misc{finance_2,
      title={Adaptive Movement Sampling Physics-Informed Residual Network (AM-PIRN) for Solving Nonlinear Option Pricing models}, 
      author={Gao, Q. and Wang, Z. and Zhang, R. and Wang, D.},
      year={2025},
      eprint={2504.03244},
      archivePrefix={arXiv},
      primaryClass={cs.NI},
      url={https://arxiv.org/abs/2504.03244}, 
}

@inproceedings{finance_3,
author = {Ibrahim, A. Q. and G\"{o}tschel, S. and Ruprecht, D.},
title = {Parareal with a Physics-Informed Neural Network as coarse propagator},
year = {2023},
isbn = {978-3-031-39697-7},
booktitle = {29th International Conference on Parallel and Distributed Computing},
pages = {649–663},
numpages = {15},
}

@inproceedings{finance_4,
author = {Nuugulu, S. M. and Patidar, K. C. and Tarla, D. T.},
title = {A Physics informed neural network approach for solving time fractional Black-Scholes partial differential equations},
year = {2024},
doi = {https://doi.org/10.1007/s11081-024-09910-7},
booktitle = {Optim Eng},
}

@article{aurora,
  title = {A Foundation Model for the {{Earth}} System},
  author = {Bodnar, C. and Bruinsma, Wessel P. and Lucic, A. and Stanley, M. and Allen, A. and Brandstetter, J. and Garvan, P. and Riechert, M. and Weyn, J. A. and Dong, H. and Gupta, J. K. and Thambiratnam, K. and Archibald, A. T. and Wu, C.-C. and Heider, E. and Welling, M. and Turner, R. E. and Perdikaris, P.},
  year = 2025,
  month = may,
  journal = {Nature},
  volume = {641},
  number = {8065},
  pages = {1180--1187},
  issn = {1476-4687},
  doi = {10.1038/s41586-025-09005-y},
}

@misc{bnn,
      title={Bayesian Neural Networks}, 
      author={Mullachery, V. and Khera, A. and Husain, A.},
      year={2018},
      eprint={1801.07710},
      archivePrefix={arXiv},
      primaryClass={cs.LG},
      url={https://arxiv.org/abs/1801.07710}, 
}

@article{bnn_2,
title = {Bayesian neural networks and density networks},
journal = {Nuclear Instruments and Methods in Physics Research Section A: Accelerators, Spectrometers, Detectors and Associated Equipment},
volume = {354},
number = {1},
pages = {73-80},
year = {1995},
issn = {0168-9002},
author = {MacKay, D. J. C },
}

@article{sinn_1,
  title={Accurate prediction of protein function using statistics-informed graph networks},
  author={Jang, Y. J. and Qin, Q.-Q. and Huang, S.-Y. and Peter, A. T. J. and Ding, X.-M. and Kornmann, B.},
  journal={Nature communications},
  volume={15},
  number={1},
  pages={6601},
  year={2024},
  publisher={Nature Publishing Group UK London}
}

@article{sinn_2,
  title={Learning stochastic dynamics with statistics-informed neural network},
  author={Zhu, Y. and Tang, Y.-H. and Kim, C.},
  journal={Journal of Computational Physics},
  volume={474},
  pages={111819},
  year={2023},
  publisher={Elsevier}
}

@Article{gmd-14-3007-2021,
AUTHOR = {Leach, N. J. and Jenkins, S. and Nicholls, Z. and Smith, C. J. and Lynch, J. and Cain, M. and Walsh, T. and Wu, B. and Tsutsui, J. and Allen, M. R.},
TITLE = {FaIRv2.0.0: a generalized impulse response model for climate uncertainty and future scenario exploration},
JOURNAL = {Geoscientific Model Development},
VOLUME = {14},
YEAR = {2021},
NUMBER = {5},
PAGES = {3007--3036},
URL = {https://gmd.copernicus.org/articles/14/3007/2021/},
DOI = {10.5194/gmd-14-3007-2021}
}

@article{smith2018fair,
  title={{FaIR} v1.3: a simple emissions-based impulse response and carbon cycle model},
  author={Smith, C. J. and Forster, P. M. and Allen, M. and Leach, N. and Millar, R. J. and Passerello, G. A. and Regayre, L. A.},
  journal={Geoscientific Model Development},
  volume={11},
  number={6},
  pages={2273--2297},
  year={2018},
  publisher={Copernicus GmbH}
}

@article{weichel2024uncertaintyquantificationportfoliotemperature,
      title={Uncertainty Quantification in Portfolio Temperature Alignment}, 
      author={Weichel, H. and Zinovev, A. and Haario, H. and Simon, M.},
      year={2024},
      eprint={2412.14182},
      archivePrefix={arXiv},
      primaryClass={q-fin.PM},
        journal={arXiv preprint arXiv:2412.14182},
}

@article{gobetMetamodellingPathsSimple2025,
  title={Meta-modelling paths of simple climate models using neural networks and dirichlet polynomials: an application to DICE},
  author={Gobet, E. and Liu, Y. and Vermandel, G.},
  journal={European Actuarial Journal},
  pages={1--45},
  year={2025},
  publisher={Springer}
}

@article{haarioDRAMEfficientAdaptive2006,
  title = {{{DRAM}}: {{Efficient}} Adaptive {{MCMC}}},
  author = {Haario, H. and Laine, M. and Mira, A. and Saksman, E.},
  year = {2006},
  date = {2006-12-01},
  journal = {Statistics and Computing},
  shortjournal = {Statistics and Computing},
  volume = {16},
  number = {4},
  pages = {339--354},
  issn = {1573-1375},
  doi = {10.1007/s11222-006-9438-0},
  url = {https://doi.org/10.1007/s11222-006-9438-0},
}

@book{ProcessAnalysisHimmelblau,
    author = {Himmelblau, D. M.},
    title = {Process analysis by statistical methods},
    publisher = {John Wiley \& Sons, Inc.},
    year = 1970,
    note = {Exercise 9.9}
}

@article{o2014new,
  title={A new scenario framework for climate change research: the concept of shared socioeconomic pathways},
  author={O’Neill, B. C. and Kriegler, E. and Riahi, K. and Ebi, K. L. and Hallegatte, S. and Carter, T. R. and Mathur, R. and Van Vuuren, D. P.},
  journal={Climatic change},
  volume={122},
  pages={387--400},
  year={2014},
  publisher={Springer}
}

@article{bourgey2024efficient,
  title={An Efficient {SSP}-based Methodology for Assessing Climate Risks of a Large Credit Portfolio},
  author={Bourgey, F. and Gobet, E. and Jiao, Y.},
  year={2024},
  journal={Available at hal-04665712}
}

@article{guerra2025learning,
  title={Learning Where to Learn: Training Distribution Selection for Provable OOD Performance},
  author={Guerra, N. and Nelsen, N. H. and Yang, Y.},
  journal={arXiv preprint arXiv:2505.21626},
  year={2025}
}

@book{villani2009optimal,
  title={Optimal transport: old and new},
  author={Villani, C{\'e}dric and others},
  volume={338},
  year={2009},
  publisher={Springer}
}

@book{santabrogio2015optimal,
  title={Optimal Transport for Applied Mathematicians: Calculus of Variations, PDEs, and Modeling},
  author={Santambrogio, F.},
  volume={87},
  year={2015},
  publisher={Birkhäuser/Springer}
}

@inproceedings{bolley2005weighted,
  title={Weighted Csisz{\'a}r-Kullback-Pinsker inequalities and applications to transportation inequalities},
  author={Bolley, F. and Villani, C.},
  booktitle={Annales de la Facult{\'e} des sciences de Toulouse: Math{\'e}matiques},
  volume={14},
  number={3},
  pages={331--352},
  year={2005}
}

@article{raissi2019physics,
  title={Physics-informed neural networks: A deep learning framework for solving forward and inverse problems involving nonlinear partial differential equations},
  author={Raissi, M. and Perdikaris, P. and Karniadakis, G. E.},
  journal={Journal of Computational physics},
  volume={378},
  pages={686--707},
  year={2019},
  publisher={Elsevier}
}

@article{yang2021b,
  title={B-PINNs: Bayesian physics-informed neural networks for forward and inverse PDE problems with noisy data},
  author={Yang, L. and Meng, X. and Karniadakis, G. E.},
  journal={Journal of Computational Physics},
  volume={425},
  pages={109913},
  year={2021},
  publisher={Elsevier}
}

\end{document}